\numberwithin{equation}{section}
\newif\ifRR\RRtrue
\newtheorem{prop}{Proposition}[section]
\newtheorem{theo}[prop]{Theorem}
\newtheorem{lem}[prop]{Lemma}
\theoremstyle{definition}
\newtheorem{defin}[prop]{Definition}
\newcommand{\I}{\mathbb{I}}
\renewcommand{\v}{\>v}
\renewcommand{\u}{\>u}
\newcommand{\x}{\>x}
\newcommand{\y}{\>y}
\newcommand{\F}{\mathbb{F}}
\newcommand{\V}{\mathbb{V}}
\newcommand{\W}{\mathcal{W}}
\newcommand{\N}{\mathcal{N}}
\newcommand{\R}{\mathbb{R}}
\newcommand{\E}{\mathbb{E}}
\newcommand{\G}{\mathcal{G}}
\renewcommand{\P}{\mathcal{P}}
\newcommand{\n}{^{\scriptscriptstyle (n)}}
\newcommand{\1}{\leavevmode\hbox{\rm \small1\kern-0.35em\normalsize1}}
\newcommand{\ind}[1]{\1_{\{#1\}}}
\def\egaldef{\stackrel{\mbox{\tiny def}}{=}} 
\def\DD{\displaystyle}
\newcommand{\markchanges}[5][]{%
    \ifthenelse{\equal{#1}{}\or\equal{#2}{}}{}{{\color{yellow} (#1): }}{{\color{blue}#2}}
    \ifthenelse{\equal{#3}{}}{}{{\color{yellow}[Remove (#1): ``#3'']}}
    \ifthenelse{\equal{#4}{}}{}{{\color{green!70!black}[Comment (#1): #4]}}
    \ifthenelse{\equal{#5}{}}{}{{\color{blue}[Comment (#1): #5]}}
}
\begin{document}

\ifRR
\RRNo{7514}
\RRdate{Janvier 2011}

\RRauthor{Victorin Martin\thanks[roc]{Project Team Imara -- INRIA Paris-Rocquencourt}%
  \and Jean-Marc Lasgouttes\thanksref{roc}%
  \and Cyril Furtlehner\thanks{Project Team TAO -- INRIA Saclay \^Ile de
  France}%
}

\authorhead{V. Martin, J.-M. Lasgouttes \& C. Furtlehner}
\RRetitle{The Role of Normalization in the Belief Propagation
Algorithm\thanks{This work was supported by the French National
  Research Agency (ANR) grant Travesti (ANR-08-SYSC-017)}}
\RRtitle{Le rôle de la normalisation dans l'algorithme \emph{Belief Propagation}}

\RRresume{\selectlanguage{french}Une part importante des problèmes en physique statistique et
  en informatique peut s'écrire en termes de calcul de probabilités
  marginales d'un champ markovien aléatoire. L'algorithme de
  propagation des croyance (belief propagation), qui permet de
  calculer ces marginales quand le graphe sous-jacent est un arbre,
  est devenu populaire comme moyen efficace de les approximer dans le
  cas général. Dans cet article, on s'intéresse à un aspect de
  l'algorithme qui n'a pas été étudié de très près dans la littérature
  : l'effet de la normalisation des messages. On montre en particulier
  que pour une grande classe de stratégies de normalisation, il est
  possible de s'intéresser uniquement à la convergence des croyances.
  De plus, les conditions nécessaires et suffisantes de stabilité des
  points fixes sont exprimées en fonction de la structure du graphe et
  de la valeurs des croyances. Finalement, on décrit la relation entre
  les constantes de normalisations et l'énergie libre de Bethe
  sous-jacente.}

\RRabstract{An important part of problems in statistical physics and
  computer science can be expressed as the computation of marginal
  probabilities over a Markov Random Field. The belief propagation
  algorithm, which is an exact procedure to compute these marginals
  when the underlying graph is a tree, has gained its popularity as an
  efficient way to approximate them in the more general case. In this
  paper, we focus on an aspect of the algorithm that did not get that
  much attention in the literature, which is the effect of the
  normalization of the messages. We show in particular that, for a
  large class of normalization strategies, it is possible to focus
  only on belief convergence. Following this, we express the necessary
  and sufficient conditions for local stability of a fixed point in
  terms of the graph structure and the beliefs values at the fixed
  point. We also explicit some connexion between the normalization
  constants and the underlying Bethe Free Energy.}
\RRmotcle{algorithme de passage de messages, stabilité des
points fixes, énergie libre, approximation de Bethe}
\RRkeyword{message passing algorithm, fixed point stability, free
  energy, Bethe approximation}
\RRprojets{Imara et TAO}
\RRdomaineProj{imara} 
\RRdomaineProjBis{tao} 
\RCParis

\makeRR   

\else

\title{The Role of Normalization in the Belief Propagation
Algorithm\thanks{This work was supported by the French National
  Research Agency (ANR) grant Travesti (ANR-08-SYSC-017)}}

\author{Victorin Martin\\INRIA Paris-Rocquencourt
  \and Jean-Marc Lasgouttes\\INRIA Paris-Rocquencourt
  \and Cyril Furtlehner\\INRIA Saclay
}

\maketitle

\begin{abstract}
An important part of problems in statistical physics and computer
science can be expressed as the computation of marginal probabilities
over a Markov Random Field. The belief propagation algorithm, which is
an exact procedure to compute these marginals when the underlying
graph is a tree, has gained its popularity as an efficient way to
approximate them in the more general case. In this paper, we focus on
an aspect of the algorithm that did not get that much attention in the
literature, which is the effect of the normalization of the messages.
We show in particular that, for a large class of normalization
strategies, it is possible to focus only on belief convergence.
Following this, we express the necessary and sufficient conditions for
local stability of a fixed point in terms of the graph structure and
the beliefs values at the fixed point. We also explicit some connexion
between the normalization constants and the underlying Bethe Free
Energy.
\end{abstract}

\fi
\section{Introduction}

We are interested in this article in a random Markov field on a finite
graph with local interactions, on which we want to compute marginal
probabilities. The structure of the underlying model is described by a
set of discrete variables $\x=\{x_i,i\in \V\}\in\{1,\ldots,q\}^\V$,
where the set $\V$ of variables is linked together by so-called
``factors'' which are subsets $a\subset \V$ of variables. If $\F$ is
this set of factors, we consider the set of probability measures of
the form
\begin{equation}\label{eq:joint}
p(\x) = \prod_{i\in\V}\phi_i(x_i) \prod_{a\in\F}\psi_a(\x_a),
\end{equation}
where $\x_a=\{x_i, i\in a\}$.

$\F$ together with $\V$ define the factor graph $\G$~\citep{Kschi},
that is an undirected bipartite graph, which will be assumed to be
connected. We will also assume that the functions $\psi_a$ are never equal
to zero, which is to say that the Markov random field exhibits no
deterministic behavior. The set $\E$ of edges contains all the couples
$(a,i)\in\F\times\V$ such that $i\in a$. We denote $d_a$ (resp.\ $d_i$) the
degree of the factor node $a$ (resp.\ of the variable node $i$), and $C$
the number of independent cycles of $\G$.

Exact procedures to compute marginal probabilities of $p$ generally
face an exponential complexity problem and one has to resort to approximate
procedures. The Bethe approximation, which is used in
statistical physics, consists in minimizing an approximate version of the
variational free energy associated to~(\ref{eq:joint}). In computer
science, the belief propagation (BP) algorithm~\citep{Pearl} is a
message passing procedure that allows to compute efficiently exact
marginal probabilities when the underlying graph is a tree. When the
graph has cycles, it is still possible to apply the procedure, which
converges with a rather good accuracy on sufficiently sparse graphs.
However, there may be several fixed points, either stable or unstable.
It has been shown that these fixed points coincide with stationary points
of the Bethe free energy~\citep{YeFrWe3}. In addition~\citep{Heskes4,WaFu},
stable fixed points of BP are local minima of the Bethe free energy. We
will come back to this variational point of view of the BP algorithm in
Section~\ref{sec:var}.

We discuss in this paper an aspect of the algorithm that did not get
that much attention in the literature, which is the effect of the
normalization of the messages on the behavior of the algorithm.
Indeed, the justification for normalization is generally that it
``improves convergence''. Moreover, different authors use different
schemes, without really explaining what are the difference between
these definitions.

\medskip

The paper is organized as follows: the BP algorithm and its various
normalization strategies are defined in Section~\ref{sec:algorithm}.
Section~\ref{sec:mnormalization} deals with the effect of different
types of messages normalization on the existence of fixed points.
Section~\ref{sec:bdynamic} is dedicated to the dynamic of the
algorithm in terms of beliefs and cases where convergence of messages
is equivalent to convergence of beliefs; moreover, it is shown that
normalization does not change belief dynamic. In
Section~\ref{sec:stability}, we show that normalization is required
for convergence of the messages, and provide some sufficient
conditions. Finally, in Section~\ref{sec:var}, we tackle the issue of
normalization in the variational problem associated to Bethe
approximation. New research directions are proposed in
Section~\ref{sec:conclusion}.

%

\section{The belief propagation algorithm}\label{sec:algorithm}

The belief propagation algorithm~\citep{Pearl} is a message passing
procedure, which output is a set of estimated marginal probabilities,
the beliefs $b_a(\x_a)$ (including single nodes beliefs $b_i(x_i)$).
The idea is to factor the marginal probability at a given site as a
product of contributions coming from neighboring factor nodes, which
are the messages. With definition \eqref{eq:joint} of the joint
probability measure, the updates rules read:
\begin{align}
m_{a\to i}(x_i) &\gets
\sum_{\x_{a\setminus i}} \psi_a(\x_a)\prod_{j\in a\setminus i} 
n_{j\to a }(x_j), \label{urules}\\[0.2cm]
n_{i \to a}(x_i) &\egaldef \phi_i(x_i)\prod_{a'\ni i, a'\ne a}
m_{a'\to i}(x_i), \label{urulesn}
\end{align}
where the notation $\sum_{\x_s}$ should be understood as summing all
the variables $x_i$, $i\in s\subset \V$, from $1$ to $q$.  At any
point of the algorithm, one can compute the current beliefs as
\begin{align}
b_i(x_i) &\egaldef 
\frac{1}{Z_i(m)}\phi_i(x_i)\prod_{a\ni i} m_{a\to i}(x_i),
\label{belief1}\\[0.2cm]
b_a(\x_a) &\egaldef 
\frac{1}{Z_a(m)}\psi_a(\x_a)\prod_{i\in a} n_{i\to a}(x_i),
\label{belief2} 
\end{align}
where $Z_i(m)$ and $Z_a(m)$ are the normalization constants that
ensure that
\begin{equation}
\label{eq:normb}
 \sum_{x_i} b_i(x_i) = 1,\qquad\sum_{\x_a}b_a(\x_a)=1.
\end{equation}
These constants reduce to $1$ when $\G$ is a tree.

In practice, the messages are often normalized so that
\begin{equation}\label{eq:normalization}
 \sum_{x_i=1}^q m_{a\to i}(x_i)= 1.
\end{equation}

However, the possibilities of normalization are not limited to this
setting. Consider the mapping
\begin{equation}\label{eq:theta}
 \Theta_{ai,x_i}(m)\egaldef
\sum_{\x_{a\setminus i}} \psi_a(\x_a)
\prod_{j\in a\setminus i}\biggl[\phi_j(x_j)
\prod_{a'\ni j, a'\ne a}m_{a'\to j}(x_j)\biggr].
\end{equation}

A normalized version of BP is defined by the update rule
\begin{equation}\label{eq:normrule}
  \tilde m_{a\to i}(x_i)
\gets\frac{\Theta_{ai,x_i}(\tilde m)}
       {Z_{ai}(\tilde m)}.
\end{equation}
where $Z_{ai}(\tilde m)$ is a constant that depends on the messages and
which, in the case of~(\ref{eq:normalization}), reads
\begin{equation}\label{eq:Zmess}
Z^\mathrm{mess}_{ai}(\tilde m) \egaldef \sum_{x=1}^q\Theta_{ai,x}(\tilde
m).
\end{equation}

In the remaining of this paper, (\ref{urules},\ref{urulesn}) will be
referred to as ``plain BP'' algorithm, to differentiate it from the
``normalized BP'' of \eqref{eq:normrule}. 

Following~\citet{Wain}, it is worth noting that the plain message
update scheme can be rewritten as
\begin{equation}\label{urule2}
 m_{a\to i}(x_i)\gets \frac{Z_a(m)b_{i|a}(x_i)}{Z_i(m)b_i(x_i)}m_{a\to
i}(x_i),
\end{equation}
where we use the convenient shorthand notation
\[
 b_{i|a}(x_i)\egaldef\sum_{\x_{a\setminus i}}b_a(\x_a).
\]

This suggests a different type of normalization, used in particular
by~\citet{Heskes4}, namely
\begin{equation}\label{eq:Zbel}
 Z^\text{bel}_{ai}(\tilde m) = \frac{Z_a(\tilde m)}{Z_i(\tilde m)},
\end{equation}
which leads to the simple update rule
\begin{equation}
\label{eq:simple_uprule}
 \tilde m_{a\to i}(x_i)\gets \frac{b_{i|a}(x_i)}{b_i(x_i)}\tilde
 m_{a\to i}(x_i).
\end{equation}

The following lemma recapitulates some properties shared by all
normalization strategies at a fixed point.

\begin{lem}\label{lem:norm} 
Let $\tilde m$ be such that
\[
  \tilde m_{a\to i}(x_i)=\frac{\Theta_{ai,x_i}(\tilde
m)}{Z_{ai}(\tilde m)}.
\]
The associated normalization constants satisfy
\begin{equation}\label{eq:rnorm}
Z_{ai}(\tilde m) = \frac{Z_a(\tilde m)}{Z_i(\tilde m)},\qquad\forall ai\in
\E,
\end{equation}
and the following compatibility condition holds.
\begin{equation}
\sum_{\mathrm{\x}_{a\setminus i}} b_a(\mathrm{\x}_a) =
b_i(x_i).\label{eq:compat}
\end{equation}
In particular, when $Z_{ai}\equiv 1$ (no normalization), all the $Z_a$
and $Z_i$ are equal to some common constant $Z$.
\end{lem}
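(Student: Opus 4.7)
The plan is to compute $\sum_{\x_{a\setminus i}} b_a(\x_a)$ directly from the definitions, recognize the map $\Theta_{ai,x_i}$ inside it, substitute the fixed-point equation to eliminate $\Theta_{ai,x_i}$ in favor of $\tilde m_{a\to i}$, and then reconstitute the definition of $b_i(x_i)$ on the right-hand side. A final normalization argument (summing over $x_i$) then yields \eqref{eq:rnorm}, and plugging this back gives the compatibility condition \eqref{eq:compat}.

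More concretely, the first step is to expand
\[
\sum_{\x_{a\setminus i}} b_a(\x_a)
 = \frac{1}{Z_a(\tilde m)}\,n_{i\to a}(x_i)
   \sum_{\x_{a\setminus i}}\psi_a(\x_a)\prod_{j\in a\setminus i}n_{j\to a}(x_j),
\]
where the factor $n_{i\to a}(x_i)$ pulls out of the sum since it does not involve $\x_{a\setminus i}$. The sum that remains is precisely $\Theta_{ai,x_i}(\tilde m)$, so that at a fixed point we may write it as $Z_{ai}(\tilde m)\,\tilde m_{a\to i}(x_i)$. Expanding $n_{i\to a}$ via \eqref{urulesn} and multiplying in the extra factor $\tilde m_{a\to i}(x_i)$ then reconstructs $\phi_i(x_i)\prod_{a'\ni i}\tilde m_{a'\to i}(x_i) = Z_i(\tilde m)\,b_i(x_i)$, leaving
\[
\sum_{\x_{a\setminus i}} b_a(\x_a)
 = \frac{Z_{ai}(\tilde m)\,Z_i(\tilde m)}{Z_a(\tilde m)}\,b_i(x_i).
\]

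From this single identity everything follows: summing over $x_i$ and using \eqref{eq:normb} on both sides forces $Z_{ai}(\tilde m)Z_i(\tilde m)/Z_a(\tilde m) = 1$, which is \eqref{eq:rnorm}; substituting back then collapses the prefactor to $1$, giving the compatibility condition \eqref{eq:compat}. For the final assertion, when $Z_{ai}\equiv 1$ the relation \eqref{eq:rnorm} reads $Z_a(\tilde m) = Z_i(\tilde m)$ along every edge $ai\in\E$; connectedness of the factor graph $\G$ (assumed in the introduction) propagates this equality across all variable and factor nodes, so all $Z_a$ and $Z_i$ coincide with a single constant.

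There is no real obstacle here: the only mildly delicate point is keeping track of which factor is pulled out of the sum over $\x_{a\setminus i}$ and being careful that $\prod_{a'\ni i, a'\ne a}\tilde m_{a'\to i}(x_i)\cdot \tilde m_{a\to i}(x_i)$ indeed closes up into the full product $\prod_{a'\ni i}\tilde m_{a'\to i}(x_i)$ defining $b_i$. Everything else is a bookkeeping exercise manipulating the definitions \eqref{eq:theta}, \eqref{belief1}, \eqref{belief2} together with the fixed-point equation \eqref{eq:normrule}.
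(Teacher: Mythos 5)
Your proof is correct and follows essentially the same route as the paper: both derive the key identity $\sum_{\x_{a\setminus i}} b_a(\x_a) = \frac{Z_{ai}(\tilde m)\,Z_i(\tilde m)}{Z_a(\tilde m)}\,b_i(x_i)$ from the fixed-point equation together with the definitions of the beliefs, then sum over $x_i$ using the normalization of $b_a$ and $b_i$ to obtain \eqref{eq:rnorm}, and back-substitute to get \eqref{eq:compat}. Your explicit connectedness argument for the final assertion (when $Z_{ai}\equiv 1$) is a welcome addition that the paper leaves implicit.
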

\begin{proof}
The normalized update rule~\eqref{eq:normrule}, together with
(\ref{belief1})--(\ref{belief2}), imply
\[
\sum_{\x_a\backslash x_i} b_a(\x_a) = \frac{Z_i Z_{ai}}{Z_a}b_i(x_i).
\]
By definition of $Z_a$ and $Z_i$, $b_a$ and $b_i$ are normalized
to $1$, so summing this relation w.r.t $x_i$ gives~\eqref{eq:rnorm}
and the equation above reduces to~(\ref{eq:compat}).  
\end{proof}
It is known~\citep{YeFrWe3} that the belief propagation algorithm is an
iterative way of solving a variational problem, namely it minimizes over $b$ 
the  Bethe free energy $F(b)$ associated with (\ref{eq:joint}). 
\begin{equation}
\label{eq:FBethe}
  F(b) \egaldef \sum_{a,\x_a} b_a(\x_a) \log \frac{b_a(\x_a)}{\psi_a(\x_a)}+ 
\sum_{i,x_i} b_i(x_i) \log
\frac{b_i(x_i)^{1-d_i}}{\phi_i(x_i)}.
\end{equation}

Writing the Lagrangian of the minimization of (\ref{eq:FBethe}) with $b$ 
subject to the constraints (\ref{eq:compat}) and (\ref{eq:normb}), one obtains
\begin{equation*}
\mathcal{L}(b,\lambda,\gamma) = F(b) + \sum_{\substack{i
,a\ni i\\ x_i}} \lambda_{ai}(x_i)\Bigl(b_i(x_i) - \sum_{\x_a/x_i}b_a(\x_a)
\Bigr) - \sum_{i} \gamma_i\Bigl(\sum_{x_i} b_i(x_i) -1\Bigr).
\end{equation*}
The minima are stationary points of $\mathcal{L}(b,\lambda,\gamma)$ which
correspond to
\begin{equation*}
\begin{cases}
b_a(\x_a) &= \DD\frac{\psi_a(\x_a)}{e} \prod_{j\in a} \prod_{b
\ni j, b \ne a} m_{b\to j}(x_j) ,\,\forall a \in \F \\[0.2cm] 
b_i(x_i) &= \DD \phi_i(x_i)\exp(\frac{1}{d_i-1}-\gamma_i) \prod_{b\ni
i}m_{a\to i}(x_i) ,\,\forall i \in \V
\end{cases}
\end{equation*} 
with the (invertible) parametrization
\begin{equation*}
\lambda_{ai}(x_i) =  \log \prod_{b \ni i, b \ne a}
m_{b\to i}(x_i),
\end{equation*}
Enforcing constraints (\ref{eq:compat}) yields the BP fixed points
equations with normalization terms $\gamma_i$. We will return to this
variational setting in Section~\ref{sec:var}.

\section{Normalization and existence of fixed points}\label{sec:mnormalization}
We discuss here an aspect of the algorithm that did not get that much
attention in the literature, which is the equivalence of the fixed
points of the normalized and plain BP flavors. 
  
It is not immediate to check that the normalized version of the
algorithm does not introduce new fixed points, that would therefore
not correspond to true stationary points of the Bethe free
energy. We show in Theorem~\ref{thm:equivalence} that
the sets of fixed points are equivalent, except possibly when the
graph $\G$ has one unique cycle.

\bigskip

As pointed out by~\cite{MooijKappen07}, many different sets of
messages can correspond to the same set of beliefs. The following
lemma shows that the set of messages leading to the same beliefs is
simply constructed through linear mappings.

\begin{lem}
\label{lem:bel_inv}
Two set of messages $m$ and $m'$ lead to the same beliefs if, and only
if, there is a set of strictly positive constants $c_{ai}$ such that
\[
  m'_{a\to i}(x_i) = c_{ai} m_{a\to i}(x_i).
\]
\end{lem}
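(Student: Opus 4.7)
The plan is to prove the two implications separately. The ``if'' direction is straightforward: substituting $m'_{a\to i}(x_i)=c_{ai}m_{a\to i}(x_i)$ into (\ref{belief1}) and (\ref{belief2}) via (\ref{urulesn}) multiplies the unnormalized beliefs by the $\x$-independent factors $\prod_{a\ni i}c_{ai}$ and $\prod_{i\in a}\prod_{a'\ni i,\,a'\ne a}c_{a'i}$ respectively; both are absorbed into the normalizations $Z_i$ and $Z_a$, so the beliefs are unchanged.

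For the ``only if'' direction, I would introduce the pointwise ratios $r_{ai}(x_i)\egaldef m'_{a\to i}(x_i)/m_{a\to i}(x_i)$, which are well-defined and strictly positive because, under the standing assumption $\psi_a>0$, all messages can be taken strictly positive. The goal is to show that each $r_{ai}(x_i)$ is independent of $x_i$. Equating the single-variable beliefs computed from $m$ and from $m'$ via (\ref{belief1}) yields
\[
  \prod_{a\ni i} r_{ai}(x_i) = \frac{Z_i(m')}{Z_i(m)},
\]
which is independent of $x_i$, and applying the same procedure to the factor beliefs via (\ref{belief2}) and (\ref{urulesn}) gives
\[
  \prod_{i\in a}\Bigl(\prod_{a'\ni i,\,a'\ne a} r_{a'i}(x_i)\Bigr) = \frac{Z_a(m')}{Z_a(m)},
\]
which is independent of $\x_a$.

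Each inner product in the second identity depends on only one coordinate $x_i$ of $\x_a$, so the left-hand side is a product of functions of distinct variables that is constant in $\x_a$; with strictly positive factors, each factor must itself be constant in its argument. Hence $\prod_{a'\ni i,\,a'\ne a} r_{a'i}(x_i)$ is a constant for every edge $(a,i)\in\E$. Combining this with the first identity, the ratio $r_{ai}(x_i)=\bigl(\prod_{a'\ni i} r_{a'i}(x_i)\bigr)\big/\bigl(\prod_{a'\ni i,\,a'\ne a} r_{a'i}(x_i)\bigr)$ is a quotient of two $x_i$-independent quantities, so $r_{ai}(x_i)=c_{ai}$ is constant in $x_i$ as desired. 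I do not foresee any serious obstacle; the only delicate step is the ``constant product of single-variable functions implies each factor constant'' observation, which follows by fixing all coordinates of $\x_a$ but one, and the degenerate case $d_i=1$ is handled directly by the first identity alone.
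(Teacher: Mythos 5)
Your proof is correct and follows essentially the same route as the paper: both directions are handled by comparing the unnormalized expressions for $b_i$ and $b_a$, obtaining the two product identities with ratios $Z_i(m')/Z_i(m)$ and $Z_a(m')/Z_a(m)$, and concluding constancy in $x_i$ by fixing all but one coordinate of $\x_a$. The only difference is cosmetic: the paper first eliminates via the variable-side identity to get $\prod_{j\in a}c_{aj,x_j}$ constant and then compares configurations differing in one coordinate, whereas you apply the separation argument directly to the factor-side products and divide afterwards.
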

\begin{proof}
The direct part of the lemma is trivial. Concerning the other part, we
have from (\ref{belief1}) and (\ref{belief2})
\[
 \frac{b_a(\x_a)Z_a(m)}{\psi_a(\x_a)} = \prod_{j \in a}
\prod_{b \ni j, b \ne a} m_{b\to j}(x_j)
\]
\[
 \frac{b_i(x_i)Z_i(m)}{\phi_i(x_i)} = \prod_{a \ni i} m_{a\to i}(x_i).
\]

Assume the two vectors of messages $m$ and $m'$ lead to the
same set of beliefs $b$ and write $m_{a\to i}(x_i) = c_{ai,x_i}\,
m'_{a\to i}(x_i)$. Then, from the relation on $b_i$, the vector $\>c$
satisfies
\begin{equation}
\label{eq:inv_bi}
\prod_{a \ni i} c_{ai,x_i}=\prod_{a \ni i} \frac{m_{a\to i}(x_i)}{m'_{a\to i}(x_i)}= \frac{Z_i(m)}{Z_i(m')}
\egaldef v_i.
\end{equation}

Moreover, we want to preserve the beliefs $b_a$. Using~\eqref{eq:inv_bi},
we have
\begin{equation}
\label{eq:va_def}
 \prod_{j \in a} \frac{m_{a\to j}(x_j)}{m'_{a\to j}(x_j)} = \prod_{j \in a}
c_{aj,x_j} = \frac{Z_a(m')}{Z_a(m)} \prod_{i \in a} v_i \egaldef v_a,
\end{equation}
Since $v_i$ (resp. $v_a$) does not depend on the choice of $x_i$ (resp.
$\x_a$), \eqref{eq:va_def} implies the independence of $c_{ai,x_i}$ with
respect to $x_i$. Indeed, if we compare two vectors $\x_a$ and $\x_a'$
such that,
for all $i \in a\setminus j$, $x'_i = x_i$, but $x'_j \ne x_j$, then
 $c_{aj,x_j} = c_{aj,x'_j}$, which concludes the proof.
\end{proof}

\subsection{From normalized BP to plain BP}

We show that in most cases the fixed points of a normalized BP algorithm
(no matter the normalization used) are associated with fixed points of
the plain BP algorithm. Recall that $C$ is the number of independent
cycles of $\G$. 

\begin{theo}\label{thm:equivalence}
A fixed point $\tilde m$ of the BP algorithm with normalized messages
corresponds to a fixed point of the plain BP algorithm associated to the
same beliefs iff one of the two following conditions is satisfied:
\begin{enumerate}
\item[(i)] the graph $\G$ has either no cycle or more than one ($C\ne 1$);
\item[(ii)] $C=1$, and the normalization constants of the associated
beliefs are such that
\begin{equation}\label{eq:prodZ}
\prod_{a\in\F}Z_a(\tilde m)\prod_{i\in\V}Z_i(\tilde m)^{1-d_i}=1.
\end{equation}
\end{enumerate}
\end{theo}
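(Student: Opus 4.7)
The plan is to use Lemma~\ref{lem:bel_inv} to reduce the question to a finite-dimensional linear algebra problem. By that lemma, any plain BP fixed point with the same beliefs as the normalized fixed point $\tilde m$ must be of the form $m_{a\to i}(x_i) = c_{ai}\,\tilde m_{a\to i}(x_i)$ for strictly positive constants $c_{ai}$ independent of $x_i$. The theorem then amounts to determining for which $\tilde m$ such rescaling constants can be chosen to make $m$ satisfy the plain BP equations.

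For this, I would use Lemma~\ref{lem:norm} to reformulate ``$m$ is a plain BP fixed point'' as the condition $Z_a(m)=Z_i(m)=Z$ for all $(a,i)\in\E$ and some common positive constant $Z$. Direct substitution of $m_{a\to i}=c_{ai}\tilde m_{a\to i}$ into the definitions of $Z_a$ and $Z_i$ coming from \eqref{belief1}--\eqref{belief2} gives
\[
Z_i(m) = Z_i(\tilde m)\prod_{a\ni i} c_{ai}, \qquad
Z_a(m) = Z_a(\tilde m)\prod_{j\in a}\prod_{\substack{a'\ni j\\a'\ne a}} c_{a'j}.
\]
Setting $\alpha_{ai}=\log c_{ai}$ and imposing both equalities yields a linear system in $(\alpha_{ai})_{ai\in\E}$ whose coefficient matrix is the vertex-edge incidence matrix $M$ of the (bipartite, connected) factor graph $\G$, with right-hand side affine in $\log Z$ and in the known constants $\log Z_a(\tilde m),\log Z_i(\tilde m)$.

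The crux of the proof is the rank analysis of $M$. Because $\G$ is connected, any left null vector $v$ must satisfy $v_i+v_a=0$ for each edge, which forces it to be proportional to the vector taking value $+1$ on variable nodes and $-1$ on factor nodes. The left kernel is therefore one-dimensional and imposes a single scalar compatibility condition on the right-hand side. After bookkeeping (using $|\V|-|\E|+|\F|=1-C$ and $\sum_{a\in\F}d_a=\sum_{i\in\V}d_i=|\E|$), this condition simplifies to
\[
Z^{1-C} = \prod_{a\in\F}Z_a(\tilde m)\prod_{i\in\V}Z_i(\tilde m)^{1-d_i}.
\]
I expect this accounting to be the main technical obstacle: the exponents $1-d_i$ and $1-C$ must fall out naturally from the degree identities above.

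A case analysis on $C$ then finishes the argument. If $C\ne 1$, then $1-C\ne 0$ and, since the right-hand side is a product of positive numbers, there is a unique positive $Z$ making the equation hold; the system is therefore solvable and yields a valid rescaling $c_{ai}=e^{\alpha_{ai}}>0$. If $C=1$, the equation degenerates to $1=\prod_a Z_a(\tilde m)\prod_i Z_i(\tilde m)^{1-d_i}$, which is precisely condition~(ii); conversely, when (ii) holds, any $Z>0$ yields a solvable system. Both directions of the ``iff'' then follow, positivity of the rescaling being automatic from the reality of the $\alpha_{ai}$.
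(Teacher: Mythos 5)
Your overall reduction is viable and genuinely different from the paper's route: the paper substitutes $m_{a\to i}=c_{ai}\,\tilde m_{a\to i}$ directly into the plain update, obtains the linear system $(\I-A)\x=\y$ on the oriented line graph, and invokes Lemma~\ref{lem:inverse}, whereas you analyse the incidence system on the nodes of $\G$ for the constants $Z_a(m),Z_i(m)$. Your kernel bookkeeping does land on the correct condition $Z^{1-C}=\prod_a Z_a(\tilde m)\prod_i Z_i(\tilde m)^{1-d_i}$, which reproduces \eqref{eq:prodZ} (and the paper's \eqref{eq:zeroF}). One imprecision: as written, the factor-node equations $Z_a(m)=Z$ involve the constants $c_{a'j}$ with $j\in a$ but $a'\ne a$, so your coefficient matrix is \emph{not} literally the vertex-edge incidence matrix of $\G$; it only becomes the incidence matrix after eliminating with the variable-node equations (an invertible row operation, so the solvability analysis survives, but the step must be stated, and the compatibility condition must be evaluated on the transformed right-hand side -- which is evidently what you did, since the exponents $1-d_i$ appear).

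The genuine gap is the reformulation itself. You claim, citing Lemma~\ref{lem:norm}, that ``$m$ is a plain BP fixed point'' is equivalent to ``$Z_a(m)=Z_i(m)=Z$ for all $ai\in\E$''. The lemma only gives the forward implication; the converse is false for general messages (any message set can be rescaled to equalize its constants without being a fixed point). Hence in the ``if'' direction of the theorem your argument stops short: solving the linear system yields positive $c_{ai}$ for which all normalization constants coincide, but you have not shown that the rescaled messages satisfy \eqref{urules}--\eqref{urulesn}. The converse does hold for the specific family $m=c\,\tilde m$ with $\tilde m$ a normalized fixed point, and the missing verification is short: by multilinearity of $\Theta$ and the fixed-point property of $\tilde m$, one has $\Theta_{ai,x_i}(m)=P_{ai}\,Z_{ai}(\tilde m)\,\tilde m_{a\to i}(x_i)$ with $P_{ai}=\prod_{j\in a\setminus i}\prod_{a'\ni j,a'\ne a}c_{a'j}$, so the plain fixed-point condition on edge $ai$ is the scalar equation $c_{ai}=P_{ai}\,Z_{ai}(\tilde m)$; combining $Z_{ai}(\tilde m)=Z_a(\tilde m)/Z_i(\tilde m)$ from Lemma~\ref{lem:norm} with your expressions for $Z_a(m)$ and $Z_i(m)$ shows this equation is exactly $Z_a(m)=Z_i(m)$. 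Adding this computation (which is essentially the paper's own derivation of the line-graph system) closes the gap and makes your proof complete.
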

\begin{proof}
Let $\tilde m$ be a fixed point of~(\ref{eq:normrule}).
Let us find a set of constants $c_{ai}$ such that $m_{a\to i}(x_i)
=c_{ai}\,\tilde m_{a\to i}(x_i)$ is a non-zero fixed point
of~(\ref{urules},~\ref{urulesn}). Using Lemma~\ref{lem:bel_inv}, we see that
$m$ and $\tilde m$ correspond to the same beliefs. We have
\begin{align*}
 \Theta_{ai,x_i}(m) 
&= \biggl[\prod_{j\in a\setminus i}\prod_{a' \ni j, a'\ne a}c_{a'j}\biggr]
   \Theta_{ai,x_i}(\tilde m)\\
&= \biggl[\prod_{j\in a\setminus i}\prod_{a' \ni j, a'\ne a}c_{a'j}\biggr]
    Z_{ai}\,\tilde m_{a\to i}(x_i)\\
&= \frac{1}{c_{ai}}
   \biggl[\prod_{j\in a\setminus i}\prod_{a' \ni j, a'\ne a}c_{a'j}\biggr]
    Z_{ai}\,m_{a\to i}(x_i),
\end{align*}
and therefore
\[
\log c_{ai} 
 - \sum_{j\in a\setminus i}\sum_{a' \ni j, a'\ne a}\log c_{a'j}
=\log Z_{ai}.
\]
This equation is precisely in the setting of Lemma~\ref{lem:inverse} given in the Appendix,
with $x_{ai} = \log c_{ai}$ and $y_{ai} = \log Z_{ai}=\log Z_a - \log
Z_i$. It always has a solution when $C\ne 1$; when $C=1$, the
additional condition (\ref{eq:zeroF}) is required,
and~(\ref{eq:prodZ}) follows.
\end{proof}

There is in general an infinite number of fixed points $m$
corresponding to each $\tilde m$. However, as noted at the beginning
of the section, this is not a problem, since all these fixed points
correspond to the same set of beliefs. In this sense, normalizing the
messages can have the effect of collapsing equivalent fixed points.

When $C=1$, it is known~\citep{Weiss} that normalized BP always
converges to a fixed point. However, the theorem above states that
there may be no basic fixed point $m$ corresponding to a given $\tilde
m$. 

It is actually not difficult to see what happens in this case:
assume a trivial network with two variables and two factors $a=b=\{1,2\}$
and assume for simplicity that $\phi_1=\phi_2=1$. The
equations for the BP fixed point boil down to relations like
\[
 m_{a\to 1}(x_i)=\sum_{x_2}\psi_a(x_1,x_2)m_{b\to 2}(x_2),
\]
or, with a matrix notation,
\[
\>m_{a\to 1}=\Psi_a\>m_{b\to 2}=\Psi_a\Psi_b\>m_{a\to 1}.
\]

Therefore, the matrix $\Psi_a\Psi_b$ necessarily has $1$ as an
eigenvalue. Since this is not true in general, there can be no fixed point
for basic BP\@. In the normalized case,~\citet{Weiss} shows that BP
always converges to the Perron vector of this matrix. We know there is
an infinite number (not even countable, see Lemma~\ref{lem:bel_inv})
set of messages corresponding to the same beliefs. 

It is possible that the behavior of the algorithm leads to convergence
of the beliefs without the convergence of messages as the case $C=1$
suggests. Indeed, the plain BP scheme is then a linear dynamical
system which can converge to a subspace as described in \citet{Hart}.
We will describe more precisely this kind of behavior in
Section~\ref{sec:bdynamic}.

\subsection{From plain BP to normalized BP}

It turns out that there is no general result about whether a plain BP
fixed point is mapped to a fixed point by normalization. In this
section, we will thus first examine the case of a fairly general
family of normalizations, and then look at two other examples. 

\begin{defin}
A normalization $Z_{ai}$ is said to be \emph{positive homogeneous}
when it is of the form $Z_{ai} = N_{ai} \circ \Theta_{ai}$, with
$N_{ai}\,:\;\R^q\mapsto \R$ positive homogeneous functions of order $1$
satisfying
\begin{align}
\label{eq:Nlin}
 N_{ai}(\lambda m_{a\to i}) &=
\lambda N_{ai}(m_{a\to i}), \forall \lambda \geq 0.\\
\label{eq:Nai_pos}
N_{ai}(m_{a\to i}) &= 0 \iff  m_{a \to i} = 0.
\end{align}
\end{defin}
The part $\impliedby$ of \eqref{eq:Nai_pos} is obviously implied by
\eqref{eq:Nlin}. A particular family of positive homogeneous
normalizations is built from all norms $N_{ai}$ on $\R^q$. These contain in
particular the normalization $Z_{ai}^\mathrm{mess}(m)$
\eqref{eq:Zmess} or the maximum of messages
\[
 Z^\infty_{ai}(m)\egaldef\max_{x}\Theta_{ai,x}(m).
\]

It is actually not necessary to have a proper norm: 
\citet{WaFu} use a scheme that amounts to
\[
 Z^1_{ai}(m)\egaldef\Theta_{ai,1}(m).
\]

The following proposition describes the effect of the above family of
normalizations.
\begin{prop}
\label{prop:mess_norm_eq}
All the fixed points of the plain BP algorithm leading to the same set of
beliefs correspond to a unique fixed point of a positive homogeneous
normalized scheme.
\end{prop}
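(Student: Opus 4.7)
The plan is to exhibit, for any plain BP fixed point $m$, an explicit rescaling by multipliers $c_{ai}$ (in the sense of Lemma~\ref{lem:bel_inv}) that produces a fixed point of the normalized scheme, and then to verify that this construction is insensitive to the scaling freedom among plain fixed points that share the same beliefs. Concretely, I would set
\[
\tilde m_{a\to i}\egaldef\frac{m_{a\to i}}{N_{ai}(m_{a\to i})}.
\]
Since $\psi_a>0$, any plain fixed point satisfies $m_{a\to i}=\Theta_{ai,\cdot}(m)$ with strictly positive entries, so \eqref{eq:Nai_pos} guarantees $N_{ai}(m_{a\to i})>0$ and the rescaling is well defined. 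Lemma~\ref{lem:bel_inv} then ensures that $\tilde m$ and $m$ produce the same beliefs.

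The central verification relies on the multilinearity of $\Theta_{ai,x_i}$ in its incoming messages $m_{a'\to j}$ for $j\in a\setminus i$ and $a'\ni j$, $a'\ne a$: each such message appears exactly once in the product defining $\Theta$, and the scalar factors $N_{a'j}(m_{a'\to j})$ pull out of the sum over $\x_{a\setminus i}$. Setting
\[
K_{ai}\egaldef\prod_{j\in a\setminus i}\prod_{a'\ni j,\,a'\ne a}\frac{1}{N_{a'j}(m_{a'\to j})},
\]
I obtain $\Theta_{ai,x_i}(\tilde m)=K_{ai}\,\Theta_{ai,x_i}(m)=K_{ai}\,m_{a\to i}(x_i)$, the last equality holding because $m$ is a plain fixed point. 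Applying $N_{ai}$ to both sides and invoking positive homogeneity \eqref{eq:Nlin} gives $Z_{ai}(\tilde m)=K_{ai}\,N_{ai}(m_{a\to i})$, whence
\[
\frac{\Theta_{ai,x_i}(\tilde m)}{Z_{ai}(\tilde m)}=\frac{m_{a\to i}(x_i)}{N_{ai}(m_{a\to i})}=\tilde m_{a\to i}(x_i),
\]
so $\tilde m$ indeed satisfies \eqref{eq:normrule}.

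For the uniqueness part, suppose $m$ and $m'$ are plain fixed points leading to the same beliefs. Lemma~\ref{lem:bel_inv} provides positive constants $c_{ai}$ with $m'_{a\to i}=c_{ai}\,m_{a\to i}$, and positive homogeneity immediately yields
\[
\tilde m'_{a\to i}=\frac{c_{ai}\,m_{a\to i}}{N_{ai}(c_{ai}\,m_{a\to i})}=\frac{c_{ai}\,m_{a\to i}}{c_{ai}\,N_{ai}(m_{a\to i})}=\tilde m_{a\to i},
\]
so the entire equivalence class of plain fixed points with a given belief set collapses onto a single normalized fixed point.

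The only real subtlety is bookkeeping: one must observe that each incoming message enters $\Theta_{ai}$ exactly once so the $N_{a'j}$ factors pull out cleanly, and that degree-$1$ homogeneity is exactly what is needed to make the two appearances of $N_{ai}$ in numerator and denominator of the normalized update cancel. No graph-theoretic argument is required here, in contrast to Theorem~\ref{thm:equivalence}; the positive homogeneity assumption does all the work.
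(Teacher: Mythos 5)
Your proof is correct and follows essentially the same route as the paper: the same three ingredients (Lemma~\ref{lem:bel_inv}, multilinearity of $\Theta$, and degree-$1$ homogeneity of $N_{ai}$) yield the same formula $\tilde m_{a\to i}=m_{a\to i}/Z_{ai}(m)$ and the same invariance over the equivalence class of plain fixed points. The only difference is one of direction -- you construct $\tilde m$ explicitly and verify the normalized fixed-point equation, whereas the paper assumes a normalized fixed point with the same beliefs and derives its unique form -- which if anything makes the existence part more explicit than in the paper's own proof.
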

\begin{proof}
Let $m$ be a fixed point of the plain BP scheme. Using
Lemma~\ref{lem:bel_inv}, a fixed point $\tilde m$ of the normalized scheme
associated with the same beliefs than $m$ is such as
\begin{equation}\label{eq:linearm}
 \tilde m_{a\to i}(x_i) = c_{ai} m_{a\to i}(x_i).
\end{equation}
Since $\Theta$ is multi-linear,
\[
 \Theta_{ai,x_i}(\tilde m) = \left( \prod_{j \in a \setminus i} 
 \prod_{d \ni j, d \ne a} c_{dj} \right) \Theta_{ai,x_i}(m),
\]
and, using (\ref{eq:Nlin}),
\begin{align*}
 Z_{ai}(\tilde m) &= \left(
\prod_{j \in a \setminus i} \prod_{d \ni j, d \ne a} c_{dj} \right)
Z_{ai}(m),\\
 \tilde m_{a \to i}(x_i) &= \frac{\Theta_{ai,x_i}(\tilde m)}{Z_{ai}(\tilde
m)} = \frac{ m_{a \to i}(x_i)}{Z_{ai}(m)}.
\end{align*}
Therefore, $\tilde m$ is determined uniquely from $m$. Since $\tilde
m$ is clearly invariant for all the set of messages $m$ corresponding
to the same beliefs (see Lemma~\ref{lem:bel_inv}), the proof is
complete.
\end{proof}

\medskip
  
In order to emphasize the result of
Proposition~\ref{prop:mess_norm_eq}, it is interesting to describe
what happens with the belief normalization $Z^\text{bel}$
\eqref{eq:Zbel}. We know from Lemma~\ref{lem:norm} that, for any
normalization, we have at any fixed point
\[
Z_{ai}(m) = \frac{Z_a(m)}{Z_i(m)} \egaldef Z_{ai}^\text{bel}(m).
\]
Therefore, any fixed point of any normalized scheme (even of the plain
scheme) is a fixed point of the scheme with normalization $Z^\mathrm{bel}$.
We see the difference between this kind of normalization and a positive
homogeneous one. While the latter collapses families of fixed points to one
unique fixed point, $Z^\mathrm{bel}$ instead conserves all the fixed points
of all possible schemes.
  
\medskip

To conclude this section, we will present an example of a ``bad
normalization'' to illustrate a worst case scenario. Consider the
following normalization
\[
  Z_{ai}(m) = \frac{\sum_x \Theta_{ai,x}(m)}{\sup_x m_{a\to i}(x)}.
\]
This normalization, which is not homogeneous at all, defines a BP
algorithm which does not admit any fixed point. Following the proof of
Proposition~\ref{prop:mess_norm_eq}, let $\tilde m$ be a fixed point
of normalized BP associated with a plain fixed point $m$
through~\eqref{eq:linearm}, then
\[
 \tilde m_{a\to i}(x_i) = \frac{\Theta_{ai,x_i}(\tilde m)}{Z_{ai}(\tilde
m)} = \frac{\tilde
m_{a\to i}(x_i)}{Z_{ai}(m)}
\]
Indeed it is easy to check that 
\[
 Z_{ai}(\tilde m) = \frac{\prod_{j \in a \setminus i} \prod_{b \ni j, b\ne
a} c_{bj}}{c_{ai}} Z_{ai}(m).
\]
Since for any fixed point $m$ of the plain update we have
$Z_{ai}(m)>1$, no message $\tilde m$ can be a fixed point for this
normalized scheme. Using
Theorem~\ref{thm:equivalence} we conclude that this scheme admits no fixed
point.

\section{Belief dynamic}
\label{sec:bdynamic}

We are interested here in looking at the dynamic in terms of
convergence of beliefs. At each step of the algorithm, using
\eqref{belief1} and \eqref{belief2}, we can compute the current
beliefs $b\n_i$ and $b\n_a$ associated with the message $m\n$. The
sequence $m\n$ will be said to be ``$b$-convergent'' when the
sequences $b\n_i$ and $b\n_a$ converge. The term ``simple convergence''
will be used to refer to convergence of the sequence $m\n$ itself.
Simple convergence obviously implies $b$-convergence. We will first
show that for a positive homogeneous normalization, $b$-convergence
and simple convergence are equivalent. We will then conclude by
looking at $b$-convergence in a quotient space introduced in
\cite{MooijKappen07} and we show the links between these two
approaches.
\begin{prop}
 \label{prop:bconv_mconv}
 For any positive homogeneous normalization $Z_{ai}$ with
 \emph{continuous} $N_{ai}$, simple convergence and $b$-convergence
 are equivalent.
\end{prop}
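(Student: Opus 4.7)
My plan is to handle the nontrivial direction ($b$-convergence implies simple convergence) via compactness plus uniqueness of the normalized representative; the trivial direction follows from the fact that the belief formulas (\ref{belief1})--(\ref{belief2}) are continuous in the messages, which remain strictly positive along the iteration (since all $\psi_a$ and $\phi_i$ are positive).

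First, I would show that the iterates $\{m\n\}$ lie in a compact subset of the positive orthant. Applying $N_{ai}$ to both sides of the update rule (\ref{eq:normrule}) and using homogeneity (\ref{eq:Nlin}) gives $N_{ai}(\tilde m\n_{a\to i})=1$ for $n\geq 1$. Since $N_{ai}$ is continuous, positive homogeneous of order $1$, and nonzero away from the origin by (\ref{eq:Nai_pos}), its infimum on the compact simplex $\{v\in\R_+^q : \sum_x v(x)=1\}$ is some $\alpha_{ai}>0$. Scaling yields $\sum_x m\n_{a\to i}(x)\leq 1/\alpha_{ai}$, so the sequence is bounded.

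Next, assuming $b$-convergence with limits $b^*_i, b^*_a$, I would extract any convergent subsequence $m^{(n_k)}\to m^*$. Continuity of (\ref{belief1})--(\ref{belief2}) forces the beliefs associated to $m^*$ to equal $b^*$, and continuity of $N_{ai}$ gives $N_{ai}(m^*_{a\to i})=1$, so in particular $m^*_{a\to i}\neq 0$. I would then invoke Lemma~\ref{lem:bel_inv}: if $m^{**}$ is another message with $N_{ai}(m^{**}_{a\to i})=1$ producing the beliefs $b^*$, then $m^{**}_{a\to i}=c_{ai}\,m^*_{a\to i}$ for some $c_{ai}>0$, and applying $N_{ai}$ together with (\ref{eq:Nlin}) forces $c_{ai}=1$. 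Hence every convergent subsequence of $\{m\n\}$ has the same limit $m^*$, and compactness upgrades this to convergence of the full sequence.

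The main obstacle is the compactness step, where continuity of $N_{ai}$ is crucial: without it, the unit level set $\{v\geq 0 : N_{ai}(v)=1\}$ could fail to be bounded and the subsequence argument would break down. The uniqueness step, by contrast, is a direct application of Lemma~\ref{lem:bel_inv} together with positive homogeneity and presents no real difficulty.
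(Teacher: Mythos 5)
Your argument is correct, but it follows a genuinely different route from the paper. The paper's proof is constructive: combining \eqref{belief1}--\eqref{belief2} over a factor $a$, it writes $\prod_{j\in a}\tilde m\n_{a\to j}(x_j)$ as $\psi_a(\x_a)\prod_j b\n_j(x_j)/b\n_a(\x_a)$ up to a constant, then uses homogeneity of $N_{ai}$ and the fact that $N_{ai}(\tilde m\n_{a\to i})=1$ to eliminate all unknown constants, obtaining an \emph{explicit} formula expressing $\tilde m\n_{a\to i}$ as a continuous function of the current beliefs; convergence of the beliefs then transfers directly to the messages, with no compactness argument and with the limit messages identified in closed form. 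You instead use the same two ingredients (homogeneity and the normalization $N_{ai}(\tilde m\n)=1$) in a soft way: boundedness of the iterates, extraction of subsequential limits, and uniqueness of the normalized representative within a belief class via Lemma~\ref{lem:bel_inv}, which is a clean and valid alternative. What each approach buys: the paper's reconstruction map gives more (an explicit inverse from beliefs to normalized messages, useful elsewhere), while your argument is more generic but needs some care at the boundary of the positive orthant --- a subsequential limit $m^*$ is only known to be a nonzero vector, whereas Lemma~\ref{lem:bel_inv} and the continuity of the belief maps require strictly positive entries (and nonvanishing $Z_i$, $Z_a$). This is the same implicit positivity the paper relies on when invoking continuity at the limit beliefs, and in your setting it can be closed explicitly: since $\Theta_{ai,x_i}(\tilde m)=\sum_{\x_{a\setminus i}}\psi_a(x_i,\x_{a\setminus i})w(\x_{a\setminus i})$ with the same nonnegative weights $w$ for every $x_i$, the ratios $\Theta_{ai,x_i}/\Theta_{ai,x'_i}$ are bounded below by constants depending only on $\psi_a$, so the normalized iterates are bounded away from the boundary and every subsequential limit is strictly positive; with that remark added, your proof is complete.
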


\begin{proof}
Assume that the sequences of beliefs, indexed by iteration $n$, are such
that $b\n_a\to b_a$ and $b\n_i\to b_i$ as $n\to\infty$. The
idea of the proof is first to express the normalized messages $\tilde
m\n_{a\to i}$ at each step in terms of these beliefs, and then to conclude
by a continuity argument. Starting from a rewrite of
\eqref{belief1}--\eqref{belief2},
\begin{align*}
b_i\n(x_i) &= \frac{\phi_i(x_i)}{Z_i(\tilde m\n)}\prod_{a \ni i} \tilde m_{a\to i}\n(x_i), \\
 b_a\n(\x_a)  &=\frac{\psi_a(\x_a)}{Z_a(\tilde m\n)}\prod_{j \in a} \phi_j(x_j) \prod_{b \ni j, b
\ne a}
\tilde m_{b\to j}\n(x_j),
\end{align*}
one obtains by recombination
\[
 \prod_{j \in a} \tilde m\n_{a\to j}(x_j) 
 = \frac{\prod_{j \in a} Z_j(\tilde m\n)}{Z_a(\tilde m\n)} 
      \psi_a(\x_a) \frac{\prod_{j \in a} b_j\n(x_j)}{b_a\n(\x_a)} 
 \egaldef \frac{K\n_{ai}(\x_{a\setminus i};x_i)}{\tilde Z_{ai}(\tilde m)},
\]
where an arbitrary variable $i\in a$ has been singled out and
\[
 \frac{1}{\tilde Z_{ai}(\tilde m)}\egaldef\frac{\prod_{j \in a} Z_j(\tilde m\n)}{Z_a(\tilde m\n)}. 
\]

Assume now that $\x_{a\setminus i}$ is fixed and consider
$\>K\n_{ai}(\x_{a\setminus i})\egaldef K\n_{ai}(\x_{a\setminus
  i};\cdot)$ as a vector of $\R^q$. Normalizing each side of the
equation with a positive homogeneous function $N_{ai}$ yields
\[
 \frac{\tilde m\n_{a\to i}(x_i)}{N_{ai}\bigl[\tilde m\n_{a\to
i}\bigr]}
 = \frac{K\n_{ai}(\x_{a\setminus
i};x_i)}{N_{ai}\bigl[\>K\n_{ai}(\x_{a\setminus i})\bigr]}.
\]

Actually $N_{ai}\bigl[\tilde m\n_{a\to i}\bigr]=1$, since
$\tilde m\n_{a\to i}$ has been normalized by $N_{ai}$ and therefore
\[
 \tilde m\n_{a\to i}(x_i)
 = \frac{K\n_{ai}(\x_{a\setminus
i};x_i)}{N_{ai}\bigl[\>K\n_{ai}(\x_{a\setminus i})\bigr]}.
\]

This conclude the proof, since $\tilde m\n_{a\to i}$ has been expressed
as a continuous function of $b\n_i$ and $b\n_a$, and therefore it
converges whenever the beliefs converge.
\end{proof}

\bigskip
We follow now an idea developed in \cite{MooijKappen07} and study the
behavior of the BP algorithm in a quotient space corresponding to the
invariance of beliefs. First we will introduce a natural parametrization
for which the quotient space is just a vector space. Then it will be
trivial to show that, in terms of $b$-convergence, the effect of
normalization is null.

The idea of $b$-convergence is easier to express with the
new parametrization :
\begin{equation*}
 \mu_{ai}(x_i) \egaldef \log m_{a\to i}(x_i),
\end{equation*}
so that the plain update mapping (\ref{eq:theta}) becomes
\begin{equation*}
 \Lambda_{ai,x_i}(\mu) = \log \left[\sum_{\x_a \setminus i}
\psi_a(\x_a)\exp
\Bigl(\sum_{j \in a \setminus i} \sum_{\substack{b \ni j\\ b
\ne a}} \mu_{bj}(x_j)\Bigr)\right].
\end{equation*}
We have $\mu \in \N \egaldef \R^{|\E|q}$ and we define the vector
space $\W$ which is the linear span of the following vectors
$\{e_{ai} \in \N\}_{(ai)\in\E}$
\[
 (e_{ai})_{cj,x_j} \egaldef \1_{\{ai=cj\}}.
\]
It is trivial to see that the invariance set of the beliefs corresponding
to $\mu$ described in Lemma~\ref{lem:bel_inv} is simply the affine space
$\mu + \W$. So the $b$-convergence of a sequence $\mu^{(n)}$ is simply the
convergence of $\mu^{(n)}$ in the quotient space $\N \setminus \W$ (which
is a vector space, see \cite{Halmos}). Finally we define the notation $[x]$
for the canonical projection of $x$ on $\N \setminus \W$.

Suppose that we resolve to some kind of normalization on $\mu$, it is easy
to see that this normalization plays no role in the quotient space. The
normalization on $\mu$ leads to $\mu + w$ with some $w \in \W$. We have
\begin{align*}
 \Lambda_{ai,x_i}(\mu + w) &= \log \Bigl(\sum_{j \in a \setminus i}
\sum_{\substack{b \ni j\\b \ne a}} w_{bj} \Bigr) + \Lambda_{ai,x_i}(\mu)\\
 &\egaldef l_{ai} + \Lambda_{ai,x_i}(\mu),
\end{align*}
which can be summed up by 
\begin{equation}
\label{eq:quo}
 [\Lambda(\mu + \W)] = [\Lambda(\mu)],
\end{equation}
since $l\in \W$. We conclude by a proposition which is directly implied by
\eqref{eq:quo}.
\begin{prop}
\label{prop:dynamic}
The dynamic, i.e. the value of the normalized beliefs at each step, of the
BP algorithm with or without normalization is exactly the same.
\end{prop}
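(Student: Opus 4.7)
The plan is very short: the proposition follows almost immediately from the identity (\ref{eq:quo}) combined with the fact that beliefs factor through the quotient $\N\setminus\W$.

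First I would observe that any normalization scheme, when read in log coordinates $\mu_{ai}=\log m_{a\to i}$, amounts to adding some vector of $\W$. Indeed, a normalization merely rescales each message by a positive scalar $1/Z_{ai}$, so the log-correction is $-\log Z_{ai}$ on the $(ai)$-component, which is precisely a multiple of the generator $e_{ai}$ of $\W$. Thus the normalized scheme can be written as $\tilde\mu^{(n+1)} = \Lambda(\tilde\mu^{(n)}) + w^{(n)}$ with $w^{(n)}\in\W$, while the plain scheme reads $\mu^{(n+1)} = \Lambda(\mu^{(n)})$.

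Next I would proceed by induction on $n$, assuming both sequences start from representatives of the same equivalence class $[\mu^{(0)}]=[\tilde\mu^{(0)}]$. If $[\mu^{(n)}] = [\tilde\mu^{(n)}]$, write $\tilde\mu^{(n)} = \mu^{(n)} + w$ with $w\in\W$; then, using (\ref{eq:quo}) and the fact that $w^{(n)}\in\W$ vanishes in the quotient,
\[
[\tilde\mu^{(n+1)}]
= [\Lambda(\mu^{(n)} + w) + w^{(n)}]
= [\Lambda(\mu^{(n)} + w)]
= [\Lambda(\mu^{(n)})]
= [\mu^{(n+1)}].
\]
Finally I would invoke Lemma~\ref{lem:bel_inv} (in its log form): since the beliefs $b_i^{(n)}$ and $b_a^{(n)}$ depend only on the equivalence class $[\mu^{(n)}]\in\N\setminus\W$, the belief sequences produced by the two schemes coincide pointwise.

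The only real obstacle is the bookkeeping in the first step, namely checking that every admissible normalization corresponds in log coordinates to translation by an element of $\W$; once that is granted, the rest is a one-line induction together with (\ref{eq:quo}), and no new estimates are needed.
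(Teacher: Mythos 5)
Your proposal is correct and follows essentially the same route as the paper: the paper also observes that any normalization acts in log coordinates as a translation by an element of $\W$, deduces the invariance \eqref{eq:quo} of $\Lambda$ on the quotient $\N\setminus\W$, and concludes since the beliefs depend only on the equivalence class (Lemma~\ref{lem:bel_inv}). Your explicit induction merely spells out what the paper states as ``directly implied by \eqref{eq:quo}''.
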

We will come back to this vision in term of quotient space in section
\ref{ssec:b2qs}.

\section{Local stability of BP fixed points}\label{sec:stability}

The question of convergence of BP has been addressed in a series of
works \citep{Tatikonda02,MooijKappen07,Ihler} which establish
conditions and bounds on the MRF coefficients for having global
convergence. In this section, we change the viewpoint and, instead of
looking for conditions ensuring a single fixed point, we examine the
different fixed points for a given joint probability and their local
properties.

In what follows, we are interested in the local stability of a message
fixed point $m$ with associated beliefs $b$. It is known that a BP
fixed point is locally attractive if the Jacobian of the relevant
mapping ($\Theta$ or its normalized version) at this point has a
spectral radius strictly smaller than $1$ and unstable when the
spectral radius is strictly greater than $1$. The term ``spectral
radius'' should be understood here as the modulus of the largest
eigenvalue of the Jacobian matrix.

We will first show that BP with plain messages can in fact
never converge when there is more than one cycle
(Theorem~\ref{thm:unstable}), and then explain how normalization of
messages improves the situation (Proposition~\ref{prop:Jtilde},
Theorem~\ref{thm:stability}).

\subsection{Unnormalized messages}\label{ssec:stabplain}

The characterization of the local stability relies on two ingredients.
The first one is the oriented line graph $L(\G)$ based on $\G$, which
vertices are the elements of $\E$, and which oriented links relate
$ai$ to $a'j$ if $j\in a\cap a'$, $j\ne i$ and $a'\ne a$. The
corresponding $0$-$1$ adjacency matrix $A$ is defined by the
coefficients
\begin{equation}\label{defA}
A_{ai}^{a'j} \egaldef\ind{j\in a\cap a',\,j\ne i,\, a'\ne a}.
\end{equation}

The second ingredient is the set of stochastic matrices
$B^{(iaj)}$, attached to pairs of variables $(i,j)$ having a
factor node $a$ in common, and which coefficients are the conditional
beliefs,
\begin{equation*}
b_{k\ell}^{(iaj)} \egaldef b_a(x_j=\ell\vert x_i = k)
= \sum_{\x_{a\setminus\{i,j\}}}\frac{b_a(\x_a)}{b_i(x_i)}
\Bigg|_{\substack{x_i=k\\ x_j=\ell}}
\end{equation*}
for all $(k,\ell)\in\{1,\ldots,q\}^2$. 

Using the representation~(\ref{urule2}) of the BP algorithm, the
Jacobian reads at this point:
\begin{align*}
\frac{\partial \Theta_{ai,x_i}(m)}{\partial m_{a'\to j}(x_j)}
&=\sum_{\x_{a\setminus \{i,j\}}} 
       \frac{b_a(\x_a)}{b_i(x_i)}
       \frac{m_{a\to i}(x_i)}{m_{a'\to j}(x_j)}
       \ind{j\in a\setminus i}\ind{a' \ni j, a'\ne a}\\
&= \frac{b_{ij|a}(x_i,x_j)}{b_i(x_i)}
   \frac{m_{a\to i}(x_i)}{m_{a'\to j}(x_j)}A_{ai}^{a'j}
\end{align*}

Therefore, the Jacobian of the plain BP algorithm is---using a trivial
change of variable---similar to the
matrix $J$ defined, for any pair $(ai,k)$ and $(a'j,\ell)$ of
$\E\times\{1,\ldots,q\}$ by the elements
\begin{equation*}
J_{ai,k}^{a'j,\ell}  \egaldef b_{k\ell}^{(iaj)} A_{ai}^{a'j},
\end{equation*}
This expression is analogous to the Jacobian encountered in
\citet{MooijKappen07}. It is interesting to note that it only depends
on the structure of the graph and on the belief corresponding to the
fixed point. 

Since $\G$ is a singly connected graph, it is clear that $A$ is an
irreducible matrix. To simplify the discussion, we assume in the
following that $J$ is also irreducible. This will be true as long as
the $\psi$ are always positive. It is easy to see that to any right
eigenvector of $A$ corresponds a right eigenvector of $J$ associated
to the same eigenvalue: if $\v=(v_{ai}, ai\in\E)$ is such that
$A\v=\lambda\v$, then the vector $\v^+$, defined by coordinates
$v^+_{a'j\ell}\egaldef v_{a'j}$, for all $a'j\in\E$ and
$\ell\in\{1,\ldots,q\}$, satisfies $J\v=\lambda\v$. We will say that
$\v^+$ is a $A$-based right eigenvector of $J$. Similarly, if $\u$ is
a left eigenvector of $A$, with obvious notations one can define a
$A$-based left eigenvector $\u^+$ of $J$ by the following coordinates:
$u^+_{aik}\egaldef u_{ai}b_i(k)$.

Using this correspondence between the two matrices, we can prove the
following result.

\begin{theo}\label{thm:unstable}
If the graph $\G$ has more than one cycle ($C>1$), and the matrix $J$
is irreducible, then the plain BP update
rules~(\ref{urules},~\ref{urulesn}) do not admit any stable fixed
point.
\end{theo}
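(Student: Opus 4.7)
The plan is to prove $\rho(J) > 1$, which by the local stability criterion (spectral radius strictly less than one) rules out any attractive fixed point.

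First, I would lift eigenvalues from $A$ to $J$ via the $A$-based eigenvector construction set up just before the theorem: any (left or right) eigenvector of $A$ with eigenvalue $\lambda$ extends to an eigenvector of $J$ with the same eigenvalue. Since $A$ is non-negative and irreducible, Perron--Frobenius provides a positive eigenvector attaining the spectral radius $\rho(A)$, and this lifts to $J$; consequently $\rho(J) \geq \rho(A)$, and it suffices to prove $\rho(A) > 1$ whenever $C > 1$.

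The defining identity $A_{ai}^{a'j} = \ind{j \in a \cap a',\, j \neq i,\, a' \neq a}$ recognizes $A$ as the adjacency matrix of the oriented line graph $L(\G)$, whose walks correspond bijectively to non-backtracking walks on $\G$ (alternating between factor and variable nodes). Hence $\mathrm{tr}(A^n)$ counts closed non-backtracking walks of length $n$ on $\G$, and $\rho(A) = \lim_{n\to\infty} \mathrm{tr}(A^n)^{1/n}$ is their exponential growth rate. A tree gives $\rho(A) = 0$; a single undirected cycle gives $\rho(A) = 1$, since only the two orientations of the cycle and their iterates survive; so the theorem reduces to showing that $C \geq 2$ forces super-unit growth.

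For this crux step I would fix, in the $2$-edge-connected core of $\G$, a vertex $v$ of degree at least three (which must exist whenever $C \geq 2$) through which two independent cycles pass, and from those cycles extract non-backtracking closed walks $W_1, W_2$ based at $v$ whose starting and ending edges at $v$ can be arranged so that every concatenation $W_{i_1} W_{i_2} \cdots W_{i_k}$ with $i_j \in \{1,2\}$ remains non-backtracking at every junction. This produces at least $2^k$ distinct closed non-backtracking walks of length at most $k \max(|W_1|, |W_2|)$, whence $\rho(A) \geq 2^{1/\max(|W_1|, |W_2|)} > 1$. The main obstacle is the bookkeeping required to verify the non-backtracking condition at the junctions in the bipartite factor-graph setting (in particular, handling the case in which the two cycles share edges at $v$, where one must replace one cycle by a symmetric-difference cycle to restore edge-disjointness at $v$). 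A slicker alternative is to invoke the Ihara--Bass formula for the non-backtracking matrix of a graph, from which one reads off directly that $C - 1$ eigenvalues lie strictly outside the unit circle as soon as $C \geq 2$.
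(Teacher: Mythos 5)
Your proposal is correct in outline and matches the paper on the first step---lifting the right Perron vector of $A$ to an $A$-based eigenvector of $J$, so that $\rho(J)\ge\rho(A)$ (the paper, using irreducibility of $J$, in fact gets $\rho(J)=\rho(A)$)---but you take a genuinely different route for the crux $\rho(A)>1$ when $C>1$. The paper argues purely algebraically: by Lemma~\ref{lem:eigen} of the appendix, every divergenceless vector field on $\G$ is an eigenvector of $A$ with eigenvalue $1$; nonzero such fields exist when $C>1$, they cannot be non-negative (their entries sum to zero around each node), and since the Perron root of the irreducible non-negative matrix $A$ is simple with a positive eigenvector, $1$ cannot be the Perron root, whence $\rho(A)>1$. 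You instead identify $A$ with the non-backtracking operator of the bipartite factor graph and bound $\rho(A)$ from below by the exponential growth rate of closed non-backtracking walks, obtained by freely concatenating two closed walks through a branch vertex of the $2$-core. This is a valid and classical alternative, and it has the merit of explaining combinatorially why $C=1$ is the borderline case (a lone cycle has growth rate exactly $1$) and of giving a quantitative bound $\rho(A)\ge 2^{1/L}$; but it is heavier than the paper's argument: the junction bookkeeping you defer is precisely where the work lies (it is usually settled by reducing to a theta, dumbbell or figure-eight subgraph rather than by symmetric differences at $v$), and turning ``at least $2^k$ closed walks of length at most $kL$'' into a statement about $\operatorname{tr}(A^n)$ for single values of $n$ needs a small extra step. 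Also, your Ihara--Bass shortcut is not a direct read-off: in that factorization the factor $(1-u^2)^{C-1}$ contributes eigenvalues $\pm1$, which lie \emph{on}, not outside, the unit circle, so concluding that eigenvalues of modulus strictly greater than $1$ exist again requires a growth (Perron-type) argument for the non-backtracking operator. In short, the paper's divergenceless-field lemma buys a shorter linear-algebraic proof plus the exact identity $\rho(J)=\rho(A)$, while your walk-counting buys combinatorial insight at the price of the junction analysis you acknowledge but do not carry out.
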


\begin{proof}
Let $\boldsymbol\pi$ be the right Perron vector of $A$, which has
positive entries, since $A$ is irreducible~\cite[Theorem 1.5]{Sen}.
The $A$-based vector $\boldsymbol\pi^+$ also has positive coordinates
and is therefore the right Perron vector of $J$~\cite[Theorem
1.6]{Sen}; the spectral radius of $J$ is thus equal to the one of $A$.

When $C>1$, Lemma~\ref{lem:eigen} implies
that $1$ is an eigenvalue of $A$ associated to divergenceless vectors.
However, such vectors cannot be non-negative, and therefore the Perron
eigenvalue of $A$ is strictly greater than $1$. This concludes the
proof of the theorem.
\end{proof}

\subsection{Positively homogeneous normalization}

We have seen in Proposition~\ref{prop:bconv_mconv} that all the
continuous positively homogeneous normalizations make simple
convergence equivalent to $b$-convergence. As a result, one expects
that local stability of fixed points will again depend on the beliefs
structure only. Since all the positively homogeneous normalization
share the same properties, we look at the particular case of
$Z_{ai}^\text{mess}(m)$, which is both simple and differentiable. We
then obtain a Jacobian matrix with more interesting properties. In
particular, this matrix depends not only on the beliefs at the fixed
point, but also on the messages themselves: for the normalized BP
algorithm (\ref{eq:normrule} with $Z^\mathrm{mess}_{ai}$), the
coefficients of the Jacobian at fixed point $m$ with beliefs
$b$ read
\begin{align*}
\lefteqn{\frac{\partial}{\partial \tilde m_{a'\to j}(\ell)}
\biggl[\frac{\Theta_{ai,k}(\tilde m)}
       {\sum_{x=1}^q\Theta_{ai,x}(\tilde m)}\biggr]}\qquad\qquad\\
&= J_{ai,k}^{a'j,\ell}\frac{m_{a\to i}(k)}{m_{a'\to j}(\ell)}
   - m_{a\to i}(k)\sum_{x=1}^q J_{ai,x}^{a'j,\ell}
                               \frac{m_{a\to i}(x)}{m_{a'\to j}(\ell)},
\end{align*}
which is again similar to  the matrix $\widetilde J$ of general term
\begin{equation}\label{eq:jacobn}
\widetilde J_{ai,k}^{a'j,\ell}
\egaldef\biggl[b_{k\ell}^{(iaj)}- \sum_{x=1}^q m_{a\to i}(x) b_{x\ell}^{(iaj)}
\biggr] A_{ai}^{a'j}
= J_{ai,k}^{a'j,\ell} - \sum_{x=1}^q m_{a\to i}(x)J_{ai,x}^{a'j,\ell}.
\end{equation}

It is actually possible to prove that the spectrum of $\tilde J$ does
not depend on the messages themselves but only of the belief at the
fixed point.

\begin{prop}\label{prop:Jtilde}
The eigenvectors of $J$ are associated to eigenvectors of $\widetilde
J$ with the same eigenvalues, except the $A$-based eigenvectors of $J$
(including its Perron vector), which belong to the kernel of
$\widetilde J$.
\end{prop}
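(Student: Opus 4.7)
The plan is to recognise that the matrix $\widetilde J$ given by \eqref{eq:jacobn} factors as $\widetilde J = PJ$, where $P$ is the linear operator on $\R^{|\E|q}$ defined coordinate-wise by
\[
 (P\v)_{ai,k} \egaldef v_{ai,k} - \sum_{x=1}^q m_{a\to i}(x)\,v_{ai,x}.
\]
This identity is read off directly from \eqref{eq:jacobn}. Because the normalized messages satisfy $\sum_x m_{a\to i}(x) = 1$, a one-line computation gives $P^2 = P$, so $P$ is a projection. Its kernel is exactly the $A$-based subspace: $(P\v)_{ai,k} = 0$ for all $k$ is equivalent to $v_{ai,k}$ being independent of $k$, which is the defining property of $A$-based vectors.

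Next I would verify that $J$ preserves the $A$-based subspace. Using $\sum_\ell b^{(iaj)}_{k\ell}=1$, one computes $(J\u^+)_{ai,k} = (A\u)_{ai}$, independent of $k$, so $J\u^+ = (A\u)^+$. Consequently, for any $A$-based $\v = \u^+$, one has $\widetilde J \v = P(J\u^+) = P\bigl((A\u)^+\bigr) = 0$, since $(A\u)^+$ lies in $\ker P$. This already dispatches the first half of the statement: every $A$-based eigenvector of $J$, in particular its Perron vector, belongs to $\ker \widetilde J$.

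For the other half, suppose $J\v = \lambda \v$ with $\v$ not $A$-based, so that $P\v \ne 0$. Decompose $\v = P\v + \>c^+$ with $\>c^+ \in \ker P$. Using the $A$-basedness of $\>c^+$, I would then compute
\[
 J(P\v) = J\v - J\>c^+ = \lambda\v - (A\>c)^+ = \lambda P\v + \bigl(\lambda\>c^+ - (A\>c)^+\bigr).
\]
The parenthesised remainder is $A$-based, hence in $\ker P$, so applying $P$ on the left gives $\widetilde J(P\v) = PJ(P\v) = \lambda P\v$. Thus $P\v$ is the promised eigenvector of $\widetilde J$ with the same eigenvalue $\lambda$.

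The only delicate point, and the one I would take care to emphasise, is the non-vanishing $P\v \ne 0$ for $\v$ outside the $A$-based subspace; this is precisely the identification $\ker P = \W$ obtained in the first step, and is what prevents the correspondence between eigenvectors of $J$ and of $\widetilde J$ from degenerating. Once the factorisation $\widetilde J = PJ$ is in place and the $J$-invariance of $\ker P$ is established, the rest is pure linear algebra.
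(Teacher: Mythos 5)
Your proof is correct and follows essentially the same route as the paper: your projector $P$ is exactly the paper's $\I-M$, the factorization $\widetilde J=(\I-M)J$ together with the annihilation of $A$-based vectors are the paper's two key facts (your explicit check that $J$ preserves $\ker P$, via row-stochasticity of the $B^{(iaj)}$, is what the paper summarizes as the elementary computation $\widetilde J M=0$), and sending a non-$A$-based eigenvector $\v$ to $P\v=\v-M\v$ is the paper's argument verbatim. The only piece of the paper's proof you do not reproduce is the short left-eigenvector argument showing that every non-zero eigenvalue of $\widetilde J$ is already an eigenvalue of $J$; that converse is not literally required by the statement as written, but it is what justifies the subsequent claim that normalization strictly reduces the spectral radius.
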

\begin{proof}
The new Jacobian matrix can be expressed from the old one as
$\widetilde J = (\mathbb{I}-M)J$, where $M$ is the matrix whose
coefficient at row $(ai,k)$ and column $(a'j,\ell)$ is $\ind{a=a',
  i=j}m_{a'\to j}(\ell)$. Elementary computations yield the following
properties of $M$:
\begin{itemize}
\item $M^2=M$: $M$ is a projector;
\item $\widetilde J M=0$.
\end{itemize}

For any right eigenvector $\v$ of $J$ associated to some eigenvalue
$\lambda$,
\begin{align*}
\widetilde J(\v-M\v)&=\widetilde J\v=(\mathbb{I}-M)J\v=\lambda(\v-M\v)
\end{align*}
so that $\v -M\v$ is a (right) eigenvector of $\widetilde J$ associated to
$\lambda$, unless $\v$ is an $A$-based eigenvector, in which case
$\v=M\v$ and $\v$ is in the kernel of $\widetilde J$.

Similarly, if $\u$ is such that $\u^T\widetilde
J=\lambda\u^T$ for $\lambda\ne 0$, then
$\lambda\u^TM=\u^T\widetilde J M=0$ and therefore
$\u^T\widetilde
J=\u^T(\mathbb{Id}-M)J=\u^TJ=\lambda\u^T$: any non-zero
eigenvalue of $\widetilde J$ is an eigenvalue of $J$. This proves the
last part of the theorem.
\end{proof}

As a consequence of this proposition, when $J$ is an irreducible matrix,
$\widetilde J$ has a strictly smaller spectral radius: the net effect
of normalization is to improve convergence (although it may actually
not be enough to guarantee convergence). To quantify this improvement
of convergence related to message normalization, we resort to
classical arguments used in speed convergence of Markov chains (see
e.g.~\citet{Bremaud}).

The presence of the messages in the Jacobian matrix $\widetilde J$
complicates the evaluation of this effect. However, it is known (see
e.g.~\citet{FuLaAu}) that it is possible to chose the functions $\hat\phi$
and $\hat\psi$ as
\begin{equation}\label{eq:phipsibethe}
  \hat\phi_i(x_i)  \egaldef \hat b_i(x_i), \qquad
  \hat\psi_a(\x_a) \egaldef\frac{\hat b_a(\x_a)}{\prod_{i\in a}\hat
b_i(x_i)},
\end{equation}
in order to obtain a prescribed set of beliefs $\hat b$ at a
fixed point. Indeed, BP will admit a fixed point
with $b_a=\hat b_a$ and $b_i=\hat b_i$ when $m_{a\to i}(x_i)\equiv 1$.
Since only the beliefs matter here, without loss of generality,
we restrict ourselves in the remainder of this section to the
functions~\eqref{eq:phipsibethe}. Then, from
\eqref{eq:jacobn}, the definition of $\widetilde J$ rewrites
\begin{equation*}
\widetilde J_{ai,k}^{a'j,\ell}
\egaldef\biggl[b_{k\ell}^{(iaj)}- \frac{1}{q}\sum_{x=1}^q b_{x\ell}^{(iaj)}
\biggr] A_{ai}^{a'j}
= J_{ai,k}^{a'j,\ell} - \frac{1}{q}\sum_{x=1}^q J_{ai,x}^{a'j,\ell}.
\end{equation*}

For each connected pair $(i,j)$ of variable nodes, we associate to the
stochastic kernel $B^{(iaj)}$ a combined stochastic kernel
$K^{(iaj)}\egaldef B^{(iaj)}B^{(jai)}$, with coefficients
\begin{equation}\label{def:kernel}
K_{k\ell}^{(iaj)} 
\egaldef \sum_{m=1}^q b_{km}^{(iaj)} b_{m\ell}^{(jai)}.
\end{equation}
Since $b^{(i)}B^{(iaj)}=b^{(j)}$, $b^{(i)}$ is the
invariant measure associated to $K$:
\[
 b^{(i)}K^{(iaj)}=b^{(i)}B^{(iaj)}B^{(jai)} 
 = b^{(j)}B^{(jai)} = b^{(i)}
\]
and $K^{(iaj)}$ is reversible, since
\begin{align*}
b_k^{(i)}  K_{k\ell}^{(iaj)} &= \sum_{m=1}^q b_{mk}^{(jai)} 
b_m^{(j)} b_{m\ell}^{(jai)}\\[0.2cm] 
&= \sum_{m=1}^q b_{mk}^{(jai)} b_{\ell m}^{(iaj)} b_\ell^{(i)} 
= b_\ell^{(i)} K_{\ell k}^{(iaj)}.
\end{align*}

Let $\mu_2^{(iaj)}$ be the second largest eigenvalue of $K^{(iaj)}$ and
let
\begin{equation*}
\mu_2 \egaldef \max_{ij} |\mu_2^{(iaj)}|^{\frac{1}{2}}.
\end{equation*}

The combined effect of the graph and of the local correlations, 
on the stability of the reference fixed point is stated as follows.
\begin{theo}\label{thm:stability}
Let $\lambda_1$ be the Perron eigenvalue of the matrix $A$
\begin{enumerate}
\item[(i)] if $\lambda_1\mu_2 < 1$, the fixed point of the normalized BP
schema (\ref{eq:normrule} with $Z^\mathrm{mess}_{ai}$)
associated to $b$ is stable.
\item[(ii)] condition (i) is necessary and sufficient if the system is
homogeneous ($B^{(iaj)} = B$ independent of $i$, $j$ and $a$),
with $\mu_2$ representing the second largest eigenvalue of $B$.
\end{enumerate}
\end{theo}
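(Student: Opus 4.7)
My plan is to bound or compute the spectral radius $\rho(\widetilde J)$ of the Jacobian at the fixed point, since local stability of the normalized BP iteration is equivalent to $\rho(\widetilde J) < 1$. Part (ii) admits a direct eigenvalue computation via a Kronecker factorisation, while part (i) requires combining the spectral gap of the reversible kernels $K^{(iaj)}$ with the Perron--Frobenius structure of $A$.

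For (ii), in the homogeneous case the Jacobian factorises as a Kronecker product $J = A \otimes B$ on $\R^{|\E|}\otimes\R^q$, with spectrum $\{\lambda_p\mu_q(B)\}$. The $A$-based eigenvectors are exactly those of the form $\v_p\otimes\mathbf{1}$, associated with the Perron eigenvalue $1$ of $B$; by Proposition~\ref{prop:Jtilde} they lie in $\ker\widetilde J$, so the non-zero spectrum of $\widetilde J$ is $\{\lambda_p\mu_q(B):\mu_q\ne 1\}$, with largest modulus $\lambda_1|\mu_2(B)|$. Since $K = B^2$ in this case, $\sqrt{\mu_2(K)} = |\mu_2(B)|$, and $\lambda_1\mu_2 < 1$ is both necessary and sufficient.

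For (i), I would restrict $J$ to the subspace $W \egaldef \{v : \sum_k b_k^{(i)} v_{ai,k} = 0,\ \forall ai\in\E\}$. The identity $b_k^{(i)} B^{(iaj)}_{k\ell} = b_\ell^{(j)} B^{(jai)}_{\ell k}$ (both equal the pair marginal of $b_a$) makes $W$ invariant under $J$ and shows that $(B^{(iaj)})^\ast = B^{(jai)}$ when source and target carry the $L^2(b^{(j)})$ and $L^2(b^{(i)})$ structures. Consequently $(B^{(iaj)})^\ast B^{(iaj)} = K^{(jai)}$ is reversible and positive on $L^2(b^{(j)})$; its spectral gap yields, for every $b^{(j)}$-mean-zero $w$,
\[
\|B^{(iaj)} w\|_{L^2(b^{(i)})}^2 \leq \mu_2^{(jai)} \|w\|_{L^2(b^{(j)})}^2 \leq \mu_2^2\, \|w\|_{L^2(b^{(j)})}^2.
\]
Applying the triangle inequality to $(Jv)_{ai,k} = \sum_{a'j} A_{ai}^{a'j}(B^{(iaj)} v_{a'j,\cdot})_k$ together with this bound gives, for $v\in W$, $\|(Jv)_{ai,\cdot}\|_{L^2(b^{(i)})} \leq \mu_2\,(Ax)_{ai}$ with $x_{ai} \egaldef \|v_{ai,\cdot}\|_{L^2(b^{(i)})}$. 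Iterating, $\|(J^n v)_{ai,\cdot}\|_{L^2(b^{(i)})} \leq \mu_2^n(A^n x)_{ai}$, and the Perron growth $\|A^n\|^{1/n}\to\lambda_1$ produces $\rho(J|_W) \leq \lambda_1\mu_2 < 1$.

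The main obstacle will be connecting $\rho(\widetilde J)$ to $\rho(J|_W)$, since the projection built into $\widetilde J^{\mathrm{mess}}$ is uniform rather than $b^{(i)}$-weighted. The argument of Proposition~\ref{prop:Jtilde} carries over for any projector $\widetilde M$ onto the $A$-based subspace: $J\widetilde M$ lands back in this subspace (by $B^{(iaj)}\mathbf{1}=\mathbf{1}$) and is then annihilated by $I-\widetilde M$, so the same eigenvector correspondence holds. Specialising to the $b^{(i)}$-weighted averaging projector, and using the invariance of $W$ under $J$, one obtains $\rho(\widetilde J) = \rho(J|_W) \leq \lambda_1\mu_2 < 1$, completing the proof of (i).
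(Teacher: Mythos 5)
Your proposal is correct, and while it rests on the same two spectral ingredients as the paper (the Kronecker factorisation for the homogeneous case, and the Rayleigh-quotient bound for the reversible combined kernel $K$), the mechanics of part (i) are genuinely different. The paper never isolates an invariant subspace: it writes $\v\widetilde J^n=\v(\I-M)J^n$ for row vectors, expands $J^n$ entrywise as $(A^n)_{ai}^{a'j}$ times kernels averaged over directed paths of $L(\G)$ (its equation (\ref{eq:paths}) and Lemma~\ref{lem:ineq2}), applies the one-edge contraction recursively along each path, and closes the estimate with the hybrid norm $\sum_{ai}\pi_{ai}\|\cdot\|_{b^{(i)}}$ weighted by the Perron vector of $A$; it also first reduces to the reference functions (\ref{eq:phipsibethe}) so that the messages are constant. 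You instead restrict $J$ (acting on column vectors) to the $b^{(i)}$-weighted mean-zero subspace $W$, prove the one-step bound $\|(Jv)_{ai,\cdot}\|_{b^{(i)}}\le\mu_2\,(Ax)_{ai}$, and conclude by iteration and Gelfand's formula, which removes both the path-averaging construction and the need for the Perron-weighted norm (the weights $\pi$ only reappear implicitly through $\|A^n\|^{1/n}\to\lambda_1$); your identification $\rho(\widetilde J)=\rho(J|_W)$ via projector-independence is exactly the quotient-space viewpoint the paper only develops afterwards, in Section~\ref{ssec:b2qs}, and it also spares you the reduction to (\ref{eq:phipsibethe}). Two points should be made explicit to make this airtight, and both are available: the identity $b^{(i)}_k B^{(iaj)}_{k\ell}=b^{(j)}_\ell B^{(jai)}_{\ell k}$ and hence the $J$-invariance of $W$ use the fixed-point compatibility $b_{i|a}=b_i$ of Lemma~\ref{lem:norm}, and the equality of the nonzero spectra of $\widetilde J=(\I-M)J$ and of $J|_W$ needs both that $J$ maps the blockwise-constant subspace $\W$ into itself (row-stochasticity of the $B^{(iaj)}$) and that $W$ is a $J$-invariant complement of $\W$, so that any projector onto $\W$ yields an operator conjugate to the quotient map $[J]$; strictly speaking you get equality of nonzero spectra, hence $\rho(\widetilde J)\le\rho(J|_W)\le\lambda_1\mu_2$, which is all that stability requires. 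For part (ii) your argument is the paper's (tensor-product spectrum, with the $A$-based eigenvectors $\v_p\otimes\mathbf{1}$ killed by the normalization), stated with slightly more detail than the paper gives.
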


\begin{proof}
See Appendix~\ref{app:stability}
\end{proof}

The quantity $\mu_2$ is representative of the level of mutual
information between variables. It relates to the spectral gap (see
e.g.~\citet{DiaStr} for geometric bounds) of each elementary stochastic
matrix $B^{(iaj)}$, while $\lambda_1$ encodes the statistical
properties of the graph connectivity. The bound $\lambda_1\mu_2 <1$
could be refined when dealing with the statistical average of the sum
over path in (\ref{eq:paths}) which allows to define $\mu_2$ as
\[
\mu_2 = \lim_{n\to\infty}\max_{(ai,a'j)}
\Bigl\{\frac{1}{|\Gamma_{ai,a'j}\n|}
\sum_{\gamma\in\Gamma_{ai,a'j}\n}
\Bigl(\prod_{(x,y)\in\gamma}\mu_2^{(xy)}\Bigr)^{\frac{1}{2n}}\Bigr\}.
\]

\subsection{Local convergence in quotient space $\N \setminus \W$}
\label{ssec:b2qs}

The idea is to make the connexion between local stability of fixed point as
described previously and the same notion of local stability but in the
quotient space $\N \setminus \W$ described in Section~\ref{sec:bdynamic}.
Trivial computation based on the results of Section~\ref{ssec:stabplain}
gives us the derivatives of $\Lambda$.
\[
 \frac{\partial \Lambda_{ai,x_i}(\mu)}{\partial \mu_{bj}(x_j)}
=
\frac{b_{ij|a}(x_i,x_j)}{b_i(x_i)} A_{ai}^{bj} = J_{ai,x_i}^{bj,x_j}.
\]
In terms of convergence in $\N\setminus\W$, the stability of a fixed point
is given by the projection of $J$ on the quotient space $\N \setminus \W$ 
and we have \citep{MooijKappen07} :
\[
[J] \egaldef [\nabla \Lambda] = \nabla [\Lambda]
\]
\begin{prop}
The eigenvalues of $[J]$ are the eigenvalues of $J$ which are not
associated with $A$-based eigenvectors. The $A$-based eigenvectors of $J$
belong to the kernel of $[J]$
\end{prop}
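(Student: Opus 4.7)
The plan is to identify $\W$ with the space of $A$-based vectors in $\N$, show that $J$ preserves this subspace and acts on it exactly as the matrix $A$, and then invoke the standard fact that for an invariant subspace the characteristic polynomial factors as the product of the restriction and the induced quotient map.

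First, from the definition $(e_{ai})_{cj,x_j}=\ind{ai=cj}$, a general vector $w\in\W$ has coordinates $w_{cj,x_j}$ that depend only on the edge $cj$ and not on $x_j$. This is exactly the defining property of the $A$-based vectors of Section~\ref{ssec:stabplain}, so $\W\cong\R^{|\E|}$ via the correspondence $v\mapsto v^+$ introduced there. It is then elementary to check that $J(\W)\subset\W$ and that $J|_\W$ coincides with $A$ under this isomorphism: for $v^+\in\W$ with $v^+_{a'j,\ell}=v_{a'j}$, the stochasticity $\sum_\ell b_{k\ell}^{(iaj)}=1$ yields
\[
(Jv^+)_{ai,k}=\sum_{a'j,\ell}b_{k\ell}^{(iaj)}A_{ai}^{a'j}v_{a'j}=\sum_{a'j}A_{ai}^{a'j}v_{a'j}=(Av)_{ai},
\]
which is independent of $k$; hence $Jv^+$ is itself $A$-based and corresponds to $Av$.

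Since $J$ preserves $\W$, it induces a well-defined endomorphism on the quotient $\N\setminus\W$, which is precisely $[J]$ by definition. Choosing any basis of $\W$ and completing it to a basis of $\N$, the matrix of $J$ becomes block upper triangular with diagonal blocks representing $J|_\W=A$ and $[J]$, so $\chi_J(t)=\chi_A(t)\,\chi_{[J]}(t)$. Therefore the spectrum of $[J]$, counted with multiplicities, is obtained from that of $J$ by removing the eigenvalues of $A$, which by Proposition~\ref{prop:Jtilde} are exactly the eigenvalues carried by the $A$-based eigenvectors of $J$. The second assertion is then immediate: any $A$-based eigenvector $v^+$ lies in $\W$, so its class $[v^+]\in\N\setminus\W$ vanishes and therefore sits trivially in $\ker[J]$.

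The only step requiring any real verification is the identification of $\W$ with the subspace of $A$-based vectors and the computation that $J$ acts there as $A$; once those are in place, the conclusion reduces to standard linear algebra of invariant subspaces and I do not anticipate a substantive obstacle.
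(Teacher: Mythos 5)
Your proof is correct, but it takes a genuinely different route from the paper's. The paper argues eigenvector-by-eigenvector at the level of the projection: for an eigenvector $v$ of $J$ it writes $[Jv]=\lambda[v]$, so $[v]$ is an eigenvector of $[J]$ with the same eigenvalue unless $[v]=0$, and the $A$-based eigenvectors lie in $\W$, hence project to zero. You instead identify $\W$ with the subspace of $A$-based vectors, verify via stochasticity of $B^{(iaj)}$ that $J\W\subseteq\W$ with $J$ acting there as $A$ (i.e.\ $Jv^{+}=(Av)^{+}$), and then read off $\chi_J=\chi_A\,\chi_{[J]}$ from the block-triangular form of $J$ in a basis adapted to $\W$. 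Your version buys several things the paper's short argument leaves implicit: it shows that $[J]$ is well defined as the induced endomorphism of $\N\setminus\W$ (invariance of $\W$ under $J$), it accounts for multiplicities exactly, and it gives the converse inclusion---every eigenvalue of $[J]$ is an eigenvalue of $J$---without any diagonalizability assumption, whereas the paper's projection argument only exhibits one direction explicitly; the paper's proof is in exchange shorter and stays closer to the quotient-space language of \cite{MooijKappen07}. One small remark: your appeal to Proposition~\ref{prop:Jtilde} is unnecessary and slightly misplaced, since that proposition concerns $\widetilde J=(\I-M)J$ for the normalization $Z^{\mathrm{mess}}_{ai}$; the fact that the $A$-based eigenvectors of $J$ carry exactly the eigenvalues of $A$ already follows from your own computation $Jv^{+}=(Av)^{+}$ together with the correspondence $v\mapsto v^{+}$ of Section~\ref{ssec:stabplain}.
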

\begin{proof}
Let $v$ be an eigenvector of $J$ for the eigenvalue $\lambda$, we have
\[
 [Jv] = [\lambda v] = \lambda [v], 
\]
so $[v]$ is an eigenvector of $[J]$ with the same
eigenvalue $\lambda$ iff $[v]\ne 0$.
The $A$-based eigenvectors (see Section~\ref{ssec:stabplain})
$w$ of $J$ belongs to $\W$ so we have 
\[
 [w] = 0.
\]
It means that these eigenvectors of $J$ have no equivalent w.r.t $[J]$
and play no role in belief fixed point stability.
\end{proof}

We have seen that the normalization $Z^\mathrm{mess}_{ai}$ is equivalent to
multiplying the jacobian matrix $J$ by the projection $\I-M$ (Proposition
\ref{prop:Jtilde}), with
\[
\ker (\I-M) = \W.
\]
The projection $\I-M$ is in fact a quotient map from $\N$ to $\N\setminus
\W$. So the normalization $Z^\mathrm{mess}_{ai}$ is strictly equivalent,
when we look at the messages $m_{a\to i}(x_i)$, to working on the quotient
space $\N \setminus \W$. More generally for any differentiable positively
homogeneous normalization we will obtain the same result, the jacobian of
the corresponding normalized scheme will be the projection of the jacobian
$J$ on the quotient space $\N\setminus\W$, through some quotient map.

\section{Normalization in the variational problem}
\label{sec:var}

Since Proposition~\ref{prop:dynamic} shows that the choice of
normalization has no real effect on the dynamic of BP, it will have no
effect on $b$-convergence either. In this section, we turn to the
effect of normalization on the underlying variational problem. It will
be assumed here that the beliefs $b_i$ and $b_a$ are normalized
(\ref{eq:normb}) and compatible (\ref{eq:compat}). If only
(\ref{eq:compat}) is satisfied, they will be denoted  $\beta_i$
and $\beta_a$.
It is quite obvious that imposing only compatibility constraints leads
to a unique normalization constant $Z$
\[
 Z(\beta) \egaldef \sum_{x_i} \beta_i(x_i) = \sum_{\x_a} \beta_a(\x_a),
\]
which is not \emph{a priori} related to the constants $Z_a(m)$ and
$Z_i(m)$ seen in the previous sections. The quantities
$\beta_i(x_i)/Z(\beta)$ and $\beta_a(\x_a)/Z(\beta)$ can be
denoted as $b_i(x_i)$ and $b_a(\>x_a)$ since (\ref{eq:normb}) holds for
them. 

The aim of this section is to explicit the relationship between the
minimizations of the Bethe free energy (\ref{eq:FBethe}) with and
without normalization constraints (\ref{eq:normb}). Generally
speaking, we can express them as a  minimization problem
$\P(E)$ on some set $E$ as
\begin{equation}
\label{eq:minpb}
\P(E)\quad:\quad \underset{\beta \in E}{\operatorname{argmin}}\,F(\beta) 
\end{equation}
where $E$ is chosen as follows
\begin{itemize}
\item plain case: $E=E_1$ is the set of positive measures such as 
(\ref{eq:compat}) holds,
\item normalized case: $E=E_2 \varsubsetneq E_1$ has the additional
constraint~(\ref{eq:normb}). 
\end{itemize}

It is possible to derive a BP algorithm for the plain problem
following the same path as in Section~\ref{sec:algorithm}. The
resulting update equations will be identical, except for the
$\gamma_i$ terms. 

The first step is to compare the solutions of \eqref{eq:minpb} on
$E_1$ and $E_2$. Let $\varphi$ be the bijection between $E_1$ and
$E_2\times\mathbb{R}^*_+ $,
\begin{align*}
 \varphi :  &\, E_2\times\mathbb{R}^*_+ \longrightarrow E_1 \\
           & (b,Z) \longrightarrow bZ.
\end{align*}
The variational problem $\P(E_1)$ is equivalent to
\begin{equation*}
(\hat b, \hat Z) =  \underset{(b,Z) \in E_2}{\operatorname{argmin}}\,
F(\varphi(b,Z)),
\end{equation*}
with $\varphi(\hat b, \hat Z) = \hat b \hat Z = \hat \beta \egaldef
\underset{\beta\in E_1}{\operatorname{argmin}}\, F(\beta)$.

The next step is to express the Bethe free energy $F(\beta)$ of an
unnormalized positive measure $\beta$ as a function of the Bethe free
energy of the corresponding normalized measure $b$.
\begin{lem}
\label{lem:Fzb}
As soon as the factor graph is connected, for any $\beta = Z b \in E_1$
we have
\begin{equation}
\label{eq:Fbeta}
 F(Zb) = Z \bigl(F(b) + (1-C) \log Z\bigr),
\end{equation}
with $C$ being the number of independent cycles of the graph.
\end{lem}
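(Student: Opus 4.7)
The plan is a direct calculation: substitute $\beta_a = Z b_a$ and $\beta_i = Z b_i$ into the definition \eqref{eq:FBethe} of $F$, split each logarithm via $\log(Z b) = \log Z + \log b$, and collect the terms proportional to $\log Z$. Because $b_a$ and $b_i$ are normalized, the sums $\sum_{\x_a} b_a(\x_a)$ and $\sum_{x_i} b_i(x_i)$ each equal $1$, so the resulting coefficient of $Z\log Z$ reduces to the purely combinatorial quantity
\[
 |\F| + \sum_{i\in\V}(1-d_i) = |\F| + |\V| - \sum_{i\in\V} d_i.
\]

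The key step is then to identify this combinatorial quantity with $1-C$. I would use the bipartite handshake identity $\sum_{i\in\V} d_i = |\E|$ (the factor graph edges are in bijection with the incidences $(a,i)$, $i\in a$), so the expression becomes $|\F|+|\V|-|\E|$. Since $\G$ is connected by assumption, Euler's formula for the bipartite graph with vertex set $\F\cup\V$ gives
\[
 C = |\E|-(|\F|+|\V|)+1,
\]
so $|\F|+|\V|-|\E| = 1-C$. The remaining part $Z F(b)$ is just the homogeneity of $F$ under the pointwise rescaling $b\mapsto Zb$ before accounting for the $\log Z$ shifts.

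The only subtlety I would flag is the $(1-d_i)$ exponent on $b_i$ in the entropy term: one must make sure to write $\log\bigl((Zb_i)^{1-d_i}\bigr) = (1-d_i)\log Z + (1-d_i)\log b_i$, so that the $(1-d_i)$ factors genuinely appear in the coefficient of $\log Z$ and combine with the $|\F|$ contribution coming from the factor entropy terms. The connectedness assumption is precisely what guarantees that Euler's formula holds with the stated $C$; without it one would pick up the number of connected components instead.

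No step here is a real obstacle. The whole proof is a one-line computation once the Euler relation $|\F|+|\V|-|\E|=1-C$ is available, and the statement can be written as \eqref{eq:Fbeta} with $Z$ and $b$ treated as the two independent factors given by the bijection $\varphi$.
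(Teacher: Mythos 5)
Your proof is correct and follows exactly the same route as the paper: substitute $\beta=Zb$ into the Bethe free energy \eqref{eq:FBethe}, use the normalization of $b_a$ and $b_i$ to extract the coefficient $|\F|+|\V|-|\E|$ of $Z\log Z$ (via $\sum_i d_i=|\E|$), and invoke the Euler relation $|\F|+|\V|-|\E|=1-C$ for the connected factor graph. Nothing further is needed.
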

\begin{proof}
\begin{align*}
 F(\beta) &= F(Z b)\\
&= Z \Bigl[\sum_{a,\x_a} b_a(\x_a)\log\Bigl(\frac{Z
b_a(\x_a)}{\psi_a(\x_a)}\Bigr) + \sum_{i, x_i} 
b_i(x_i)\log \Bigl(\frac{\left(Z b_i(x_i)\right)^{1-d_i}}
{\phi_i(x_i)}\Bigr)\Bigr] \\
&= Z \Bigl( F(b) + (|\F| + |\V| - |\E| )\log Z \Bigr)\\
&= Z \Bigl( F(b) + (1-C)\log Z \Bigr),
\end{align*}
where the last equality comes from elementary graph theory (see e.g.
\cite{Berge}).
\end{proof}

The quantity $1-C$ will be negative in the nontrivial cases (at least
 $2$ cycles). Since all the $Z b$ are equivalent from our point of view, we
look at the derivatives of $F(Zb)$ as a function of $Z$ to see what happens
in the plain variational problem. 
\begin{theo}
\label{thm:eqpbvar}
The normalized beliefs corresponding to the extrema of the plain
variational problem $\P(E_1)$  are exactly the same as the ones of
the normalized problem $\P(E_2)$ as soon as $C \ne 1$.
\end{theo}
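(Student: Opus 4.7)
The plan is to use the bijection $\varphi$ introduced just before the theorem to recast $\P(E_1)$ as a joint extremum problem over $(b,Z)\in E_2\times\mathbb{R}^*_+$, and then to eliminate $Z$ in closed form.

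First, I would apply Lemma~\ref{lem:Fzb} to rewrite the objective in the form
\[
  F(Zb) = Z\,F(b) + (1-C)\,Z\log Z.
\]
Since the compatibility constraint (\ref{eq:compat}) on $\beta=Zb$ is equivalent to compatibility of $b$, and the normalization condition (\ref{eq:normb}) is the only additional constraint separating $E_2$ from $E_1$ (once the scaling factor $Z$ is freed), the Lagrangian of the $E_1$ problem splits: the constraints act only on the $b$ block, while $Z$ is an unconstrained positive scalar. Thus an extremum of $F$ on $E_1$ is the image under $\varphi$ of a joint stationary point $(\hat b,\hat Z)\in E_2\times\mathbb{R}^*_+$.

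Next, the partial derivative with respect to $Z$ at fixed $b$ is
\[
  \frac{\partial}{\partial Z}\bigl[Z\,F(b) + (1-C)Z\log Z\bigr] = F(b) + (1-C)(1+\log Z),
\]
and, assuming $C\ne 1$, setting it to zero yields the unique positive solution $Z^\star(b)=\exp\!\bigl(F(b)/(C-1)-1\bigr)$. Substituting back, the $(1-C)\log Z^\star$ term cancels with $F(b)/(C-1)$ and the reduced objective collapses to
\[
  F\!\bigl(Z^\star(b)\,b\bigr) = (C-1)\,Z^\star(b).
\]
Differentiating with respect to $b$ and using the chain rule gives
\[
  \nabla_b\bigl[(C-1)\,Z^\star(b)\bigr] = Z^\star(b)\,\nabla_b F(b),
\]
which, because $Z^\star(b)>0$ and the compatibility constraints are shared between $E_1$ and $E_2$, vanishes on $E_2$ exactly at the critical points of $F$ on $E_2$, i.e. at the solutions of $\P(E_2)$. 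Conversely, to every critical $b^\star$ of $\P(E_2)$ corresponds the joint critical point $(b^\star,Z^\star(b^\star))$, whose image $\hat\beta=Z^\star(b^\star)\,b^\star$ is critical for $\P(E_1)$ and whose normalized component is exactly $b^\star$.

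The delicate point—the only real obstacle—is the decoupling of the Lagrangian: one must verify that the Lagrange multipliers enforcing (\ref{eq:compat}) on $E_1$ can be chosen compatibly with those enforcing (\ref{eq:compat})--(\ref{eq:normb}) on $E_2$ under the scaling $\beta=Zb$, so that the partial optimization in $Z$ is truly unconstrained. Once this is granted, the role of the hypothesis $C\ne 1$ is clear: when $C=1$, the $Z$-equation degenerates to $F(b)=0$, leaving $Z$ undetermined and destroying the reduction; the cases $C=0$ and $C\ge 2$ are both treated uniformly by the computation above.
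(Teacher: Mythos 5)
Your proposal is correct and follows essentially the same route as the paper: rewrite $F(Zb)$ via Lemma~\ref{lem:Fzb}, eliminate $Z$ through the stationarity condition $\hat Z=\exp\bigl(F(b)/(C-1)-1\bigr)$, and observe that the reduced objective $(C-1)\hat Z(b)$ depends on $b$ only through $F(b)$. The only cosmetic difference is that you conclude by noting the reduced gradient is the positive multiple $Z^\star(b)\,\nabla_b F(b)$, whereas the paper phrases the same fact as monotonicity of $G(t)=(C-1)e^{t/(C-1)-1}$, which additionally gives the order-preservation used later to compare fixed points; your worry about decoupling the constraints is already settled by the bijection $\varphi$ introduced before the theorem.
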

\begin{proof}
Using Lemma~\ref{lem:Fzb} we obtain
\[
 \frac{\partial F(\beta)}{\partial Z} = F(b) + (1-C) ( \log Z + 1),
\]
the stationary points are
\begin{equation}
\label{eq:relzf}
 \hat Z = \exp \Big(\frac{F(b)}{C-1} -1\Big).
\end{equation}
At these points we can compute the Bethe free energy
\[
 F(\hat \beta) = F(\hat Z b) 
 = (C-1) \exp\Big(\frac{F(b)}{C-1} -1 \Big)
 = G(F(b)).
\]
It is easy to check that, if $C \ne 1$, $G$ is an increasing
function, so the extrema of $F(\beta)$ are reached at the same
normalized beliefs. More precisely, if $b_1$ and $b_2$ are elements of
$E_2$ such that $F(b_1) \leq F(b_2)$ then  $F(\hat\beta_1 = \hat
Z_1 b_1) \leq F(\hat \beta_2 = \hat Z_2 b_2)$, which allows us to
conclude.
\end{proof}
In other words, imposing a normalization in the variational
problem or normalizing after a solution is reached is equivalent as long as
$C \ne 1$. Moreover, in the unnormalized case, the Bethe free energy at the local extremum writes
\begin{equation}
\label{eq:hatz}
 F(b) = (\mathcal{C}-1) (\log \hat Z + 1).
\end{equation}
We can therefore compare the ``quality'' of different fixed points by
comparing only the normalization constant obtained: the smaller $Z$
is, the better the approximation, modulo the fact that we're not
minimizing a true distance.

When $C = 1$, it has been shown already in
Section~\ref{sec:mnormalization} that the normalized scheme is always
convergent, whereas the plain scheme can have no fixed point.
In this case, (\ref{eq:Fbeta}) rewrites
\[
 F(\beta) = F(Zb) = Z F(b).
\]
The form of this relationship shows what happens: if the extremum of
the normalized variational problem is strictly negative, $F(\beta)$ is
unbounded from below and $Z$ will diverge to $+\infty$; conversely, if
the extremum is strictly positive, $Z$ will go to zero. In the (very)
particular case where the minimum of the normalized problem is equal
to zero, the problem is still well defined. In fact this condition
$F(b) = 0$ is equivalent to the one of Theorem~\ref{thm:equivalence}
when $\mathcal{C}=1$.

\bigskip

To sum up, as soon as the plain variational problem is well
defined, it is equivalent to the normalized one and the normalization constant
allows to compute easily the Bethe free energy using (\ref{eq:hatz}).
When this is no longer the case, we still know that the dynamics of both 
algorithms remain the same (Proposition~\ref{prop:dynamic}) but the
plain variational problem (which can still converge in terms of beliefs)
will not converge in terms of normalization constant $Z$, and we have no
more easy information on the fixed point free energy.


\bigskip

As emphasized previously, the relationship between $\hat Z$, $Z_a(m)$
and $Z_i(m)$ is not trivial. In the case of the plain BP algorithm,
for which $Z_a(m) = Z_i(m)$, an elementary computation yields the
following relation at any fixed point
\[
 F(b) = (\mathcal{C}-1) \log Z_a(m),
\]
which seemingly contradicts (\ref{eq:hatz}). In fact, the algorithm
derived from the plain variational problem is not exactly the plain BP
scheme. Usually, since one resorts to some kind of normalization, the
multiplicative constants of the fixed
point equations are discarded (see~\citet{YeFrWe3}
for more details). Keeping track of them  yields
\begin{equation}
\label{eq:TBP}
 m_{a \to i}(x_i) = \exp\left(\frac{d_i-2}{d_i-1} \right)
\Theta_{ai,x_i}(m),
\end{equation}

\begin{align*}
 \beta_a(\x_a) &= \frac{1}{e} \psi_a(\x_a) \prod_{j \in a}
n_{j\to a}(x_j), \\
 \beta_i(x_i) &= \phi_i(x_i) \exp\left(\frac{1}{d_i-1}\right) \prod_{b
\ni i}
m_{b\to i}(x_i).
\end{align*}
Actually, the plain update scheme (\ref{urules},\ref{urulesn})
corresponds to some constant normalization
$\exp\left(\frac{d_i-2}{d_i-1} \right)$. Without any normalization, using
\eqref{eq:TBP} as update rule, one would obtain
\[
 \hat Z = \frac{Z_a(m)}{e} = Z_i(m) \exp\left(\frac{1}{d_i-1}\right).
\]

\section{Conclusion}
\label{sec:conclusion}
This paper motivation was to fill a void in the literature about the
effect of normalization on the BP algorithm. What we have learnt can
be summarized in a few main points
\begin{itemize}
\item using a normalization in BP can in some rare cases kill or
create new fixed points;
\item not all normalizations are created equal when it comes to
message convergence, but there is a big category of positive
homogeneous normalization that all have the same effect;
\item the user is ultimately concerned with convergence of beliefs,
and thankfully the dynamic of normalized beliefs is insensitive to
normalization. 
\end{itemize}

The messages having no interest by themselves, it is worthy of remark
that combining the update rules \eqref{eq:simple_uprule} recalled below
\[
 m_{a\to i}(x_i) \gets \frac{b_{i|a}(x_i)}{b_i(x_i)} m_{a\to i}(x_i),
\]
and the definition \eqref{belief1} and \eqref{belief2} of beliefs, one
can eliminate the messages and obtain
\begin{align*}
b_i(x_i)&\gets b_i(x_i)\prod_{a\ni
i}\frac{b_{i|a}(x_i)}{b_i(x_i)},\\
b_a(\x_a)&\gets b_a(\x_a)\prod_{i\in a}\prod_{c\ni i, c\ne
a}\frac{b_{i|c}(x_i)}{b_i(x_i)},
\end{align*}
One particularity of these update rules is that they do not depend on the
functions $\psi$ or $\phi$ but only on the graph structure. The
dependency on the joint law \eqref{eq:joint} occurs only through the initial
conditions. This ``product sum'' algorithm  therefore shares common
properties for all models build on the same underlying graph,
and the initial conditions should impose the details of the joint law. To
our knowledge this algorithm has never been studied and we let it for
future work.

\appendix
\section{Spectral properties of the factor
  graph}\label{app:factorgraph}

This appendix is devoted to some properties of the matrix $A$ defined
in (\ref{defA}) that are used in Sections~\ref{sec:mnormalization} and~\ref{sec:stability}.

We consider two types of fields associated to $\G$, namely scalar
fields and vector fields. Scalar fields are quantities attached to the
vertices of the graph, while vector fields are attached to its edges.
A vector field $\>w=\{w_{ai},\ ai\in\E\}$ is \emph{divergenceless} if
\[
\forall a\in\F,\ \sum_{i\in a} w_{ai} = 0 \quad\text{and}\quad
\forall i\in\V,\ \sum_{a\ni i} w_{ai} = 0.
\]
A vector field $\>u=\{u_{ai},\ ai\in\E\}$ is
a \emph{gradient} if there exists a scalar field $\{u_a,u_i,\
a\in\F,\ i\in\V\}$ such that
\[
\forall ai\in\E,\ u_{ai} = u_a - u_i.
\]

There is an orthogonal decomposition of any vector field into a
divergenceless and a gradient component. Indeed, the scalar product
\[
 \>w^T \>u 
    = \sum_{ai\in\E} w_{ai}u_{ai}
    = \sum_{a\in\F} u_a\sum_{i\in a}w_{ai}
      - \sum_{i\in\V} u_i\sum_{a\ni i}w_{ai},
\]
is $0$ for all gradient fields $\>u$ iff $\>w$ is divergenceless.
Dimensional considerations show that any vector field $\>v$
can be decomposed in this way.

In the following, it will be useful to define the Laplace operator
$\Delta$ associated to $\G$. For any scalar field $\>u$:
\begin{align}
(\Delta \>u)_a \egaldef d_a u_a -\sum_{i\in a}u_i,\qquad\forall
a\in\F\label{def:laplace1}\\[0.2cm]
(\Delta \>u)_i \egaldef d_i u_i -\sum_{a\ni i}u_a,\qquad\forall
i\in\V.\label{def:laplace2}
\end{align}

The following lemma describes the spectrum of $A$ in terms of a
Laplace equation on the graph $\G$.
\begin{lem}\label{lem:eigen} 
(i) Both gradient and divergenceless vector spaces are $A$-invariant
and divergenceless vectors are eigenvectors of $A$ with eigenvalue $1$.
(ii) eigenvectors associated to eigenvalues $\lambda\ne1$ are gradient
vectors of a scalar field $\>u$ which satisfies
\begin{equation}\label{eq:valp}
\bigl(\Delta \>u \bigr)_a = \frac{(\lambda -1)(d_a-1)}{\lambda}u_a\
\text{and}\ 
\bigl(\Delta \>u \bigr)_i = (1-\lambda)u_i.
\end{equation}
and there exists a gradient vector associated to $1$ iff  $\G$ has exactly
one cycle ($C=1$).
\end{lem}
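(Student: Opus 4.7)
The plan is to prove both parts through a direct computation of $A$'s action on divergenceless and gradient fields, and then to analyze the resulting Laplace-type equations. For part (i), divergenceless fields are handled in two rewrites: for such $\>w$, the condition $\sum_{a'\ni j} w_{a'j} = 0$ at vertex $j$ gives $\sum_{a'\ni j,\,a'\ne a} w_{a'j} = -w_{aj}$, whence $(A\>w)_{ai} = -\sum_{j\in a\setminus i} w_{aj} = w_{ai}$ by the divergenceless condition at factor $a$; thus divergenceless vectors are eigenvectors with eigenvalue $1$. For gradients $u_{ai} = u_a - u_i$, a mechanical computation using $\sum_{a'\ni j} u_{a'} = d_j u_j - (\Delta u)_j$ and $\sum_{j\in a} u_j = d_a u_a - (\Delta u)_a$ yields $(A\u)_{ai} = v_a - v_i$ with $v_a = u_a - (\Delta u)_a - \sum_{j\in a}(\Delta u)_j$ and $v_i = u_i - (\Delta u)_i$, proving that the gradient subspace is also $A$-invariant.

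For part (ii) with $\lambda \ne 1$, I would decompose an eigenvector $\v = \v_g + \v_d$ according to the orthogonal splitting. The $A$-invariance of each subspace together with $A\v_d = \v_d$ turns the eigenvalue equation into $A\v_g - \lambda\v_g = (\lambda - 1)\v_d$: the left side is a gradient while the right side is divergenceless, and since the two subspaces meet only at zero both sides vanish, forcing $\v_d = 0$. Hence $\v$ is a gradient of some scalar field $\u$. From $v_a - v_i = \lambda(u_a - u_i)$ on every edge and connectedness of $\G$, there is a constant $c$ such that $v_a = \lambda u_a + c$ and $v_i = \lambda u_i + c$. Plugging in the explicit expressions for $v_a$ and $v_i$ yields $(\Delta u)_i = (1-\lambda)u_i - c$ and, after eliminating $\sum_{j\in a} u_j$, the relation $\lambda(\Delta u)_a = (d_a - 1)[(\lambda - 1)u_a + c]$. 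Shifting $\u \mapsto \u + c/(1-\lambda)$ (which preserves both the gradient and the Laplacian values) absorbs $c$ and produces exactly the equations~(\ref{eq:valp}).

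The gradient case at $\lambda = 1$ follows from the same derivation, which specializes to $(\Delta u)_i = -c$ and $(\Delta u)_a = (d_a - 1)c$. The global identity $\sum_a (\Delta u)_a + \sum_i (\Delta u)_i = 0$, immediate from (\ref{def:laplace1})--(\ref{def:laplace2}), then forces $c(|\E| - |\F| - |\V|) = c(C-1) = 0$. If $C \ne 1$, then $c = 0$, so $\u$ is harmonic; connectedness of the bipartite factor graph together with the maximum principle forces $\u$ to be constant, and the associated gradient is zero. If instead $C = 1$, take $c = 1$: the prescribed right-hand side $((d_a - 1), -1)$ sums to zero, hence lies in the image of the symmetric graph Laplacian (whose kernel consists exactly of the constants), producing a non-constant potential $\u$ and hence a genuine gradient eigenvector for $\lambda = 1$.

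The main obstacle I anticipate is the calculation of $(A\u)_{ai}$ in gradient form $v_a - v_i$ with explicit $v_a, v_i$ expressed through $\Delta\u$, and the careful bookkeeping of the free constant $c$ arising from the non-uniqueness of the scalar potential; once these algebraic identities are in place, the remaining arguments reduce to standard facts about the graph Laplacian on a connected graph (one-dimensional kernel of constants, solvability when the right-hand side is orthogonal to that kernel, and the maximum principle).
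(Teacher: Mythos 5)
Your proof is correct and takes essentially the same route as the paper's: the explicit computation $(A\u)_{ai}=v_a-v_i$ with $v_a=u_a-(\Delta\u)_a-\sum_{j\in a}(\Delta\u)_j$ and $v_i=u_i-(\Delta\u)_i$, the reduction to gradient eigenvectors via the orthogonal gradient/divergenceless splitting, absorption of the free constant when $\lambda\ne1$, and for $\lambda=1$ the solvability condition $c(C-1)=0$, which is exactly the paper's orthogonality-to-kernel condition (your summation identity expresses the same thing). The only slip is the direction of the constant-absorbing shift: with your conventions it should be $\u\mapsto\u+c/(\lambda-1)$ rather than $\u+c/(1-\lambda)$, an immaterial sign.
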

\begin{proof}
The action of $A$  on a given vector $\x$ reads 
\begin{equation*}
\sum_{a'j\in\E} A_{ai}^{a'j}x_{a'j}
= \sum_{j\in a}\Bigl(\sum_{a'\ni j} x_{a'j} - x_{aj}\Bigr)
-\sum_{a'\ni i}x_{a'i} +x_{ai},
\end{equation*}
The first two terms in the second member vanish if $\x$ is
divergenceless. In addition, the first term in parentheses is
independent of $i$ while the second one is independent of $a$ so the
first assertion is justified. We concentrate then on solving the
eigenvalue equation $A\x -\lambda \x =0$ for a gradient vector
$\x$, with $x_{ai} = u_a -u_i$.  $A\x -\lambda \x$ is the gradient
of a constant scalar $K\in\mathbb R$, and by identification we have
\[
\begin{cases}
\DD \bigl(\Delta \>u\bigr)_a + 
\sum_{j\in a}\bigl(\Delta \>u\bigr)_j = (1-\lambda)u_a +K\\[0.2cm]
\DD \bigl(\Delta \>u\bigr)_i = (1-\lambda)u_i +K.
\end{cases}
\]
The Laplacian of a constant scalar is zero, so for $\lambda\ne 1$, $K$
may be reabsorbed in $\>u$ and, combining these two equations with the
help of identities (\ref{def:laplace1},\ref{def:laplace2}), yields
equation (\ref{eq:valp}). For $\lambda=1$, we obtain
\begin{equation}\label{eq:valp2}
\bigl(\Delta \>u \bigr)_a = (1-d_a)K\qquad\text{and}\qquad 
\bigl(\Delta \>u \bigr)_i = K.
\end{equation}
Let $D$ be the diagonal matrix associated to the graph $\G$, whose
diagonal entries are the degrees $d_a$ and $d_i$ of each node. $M = \I
- D^{-1}\Delta$ is a stochastic irreducible matrix, which unique right
Perron vector $(1,\ldots,1)$ generates the kernel of $\Delta$. As a
result, for $K=0$, the solution to (\ref{eq:valp2}) is $u_a=u_i=cte$
so that $x_{ai}=0$. 

For $K\ne 0$, there is a solution if the second member of
(\ref{eq:valp2}) is orthogonal ($\Delta$ is a symmetric operator) to
the kernel. The condition reads
\[
0 = \sum_{a}(1-d_a) +\sum_i 1= |\F| - |\E| + |\V| = 1-C,
\]
where the last equality comes from elementary graph theory (see e.g.
\cite{Berge}).
\end{proof}

Since $1$ is an eigenvalue of $A$, it is interesting to investigate
linear equations involving $\I - A$. Since it is already known that
divergenceless vectors are in the kernel of this matrix, we
restrict ourselves to the case where the constant term is of gradient
type.

\begin{lem}\label{lem:inverse}
For a given gradient vector field $\y$, the equation
\begin{equation*}
\bigl(\I - A\bigr) \x = \y,
\end{equation*}
has a solution (unique up to a divergenceless vector) iff $C\ne 1$ or
$C=1$ and
\begin{equation}\label{eq:zeroF}
\sum_{a\in\F}y_a+ \sum_{i\in\V}(1-d_i) y_i = 0.
\end{equation}
\end{lem}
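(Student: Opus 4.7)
The plan is to exploit the orthogonal decomposition of vector fields on $\E$ into gradient and divergenceless parts in order to reduce the equation $(\I - A)\x = \y$ to a Laplace problem on the factor graph $\G$. By Lemma~\ref{lem:eigen}(i), divergenceless vector fields sit inside $\ker(\I - A)$, so any particular solution $\x$ may be modified by an arbitrary divergenceless field without changing the equation; this accounts directly for the uniqueness statement. Moreover, since gradients form an $A$-invariant subspace, it is enough to look for a gradient solution $\x$.

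Next I will rewrite $(\I - A)\x = \y$ for a gradient $\x$ with potentials $(x_a, x_i)$. Following the same kind of computation as in the proof of Lemma~\ref{lem:eigen} and rearranging the double sum appearing in the action of $A$, one checks that $A\x$ remains a gradient and that the equation $(\I - A)\x = \y$, expressed in terms of potentials, is equivalent to the Laplace-type system
\[
(\Delta x)_i = y_i, \qquad (\Delta x)_a = y_a - \sum_{j \in a} y_j,
\]
where $(y_a, y_i)$ is any choice of potentials for the gradient $\y$.

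Since $\G$ is connected, the factor-graph Laplacian $\Delta$ defined by (\ref{def:laplace1})--(\ref{def:laplace2}) is a symmetric operator on scalar fields whose kernel consists of the constants. Hence the preceding system is solvable if and only if the total sum of its right-hand side over $\F \cup \V$ vanishes, which after simplification reads exactly $\sum_{a\in\F} y_a + \sum_{i\in\V} (1 - d_i) y_i = 0$, i.e.\ condition~(\ref{eq:zeroF}).

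The decisive remaining ingredient is the gauge freedom in the representation of $\y$ by potentials: replacing $(y_a, y_i)$ by $(y_a + c, y_i + c)$ leaves $\y$ unchanged but modifies the left-hand side of~(\ref{eq:zeroF}) by $c(|\F| - |\E| + |\V|) = c(1 - C)$, using the same elementary graph identity already invoked in the proof of Lemma~\ref{lem:eigen}. Thus when $C \ne 1$ one may always choose $c$ so that~(\ref{eq:zeroF}) holds, and the equation is unconditionally solvable; when $C = 1$ the condition is gauge-invariant and becomes a genuine obstruction on $\y$. The main technical step is the bookkeeping involved in computing $A\x$ on gradients and identifying the correct combinations of $x_a$, $x_i$ as Laplacian components; once this is done, the rest of the argument follows from standard properties of $\Delta$ on a connected graph together with the gauge shift.
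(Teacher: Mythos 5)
Your proof is correct and follows essentially the same route as the paper: restrict to gradient solutions, rewrite $(\I-A)\x=\y$ as a Laplace system for the potentials on the connected factor graph, and read off the compatibility condition from the fact that $\Delta$ is symmetric with kernel the constants, the count $|\F|-|\E|+|\V|=1-C$ giving the dichotomy $C\ne 1$ versus $C=1$. The additive constant $c$ you introduce via the gauge freedom in the potentials of $\y$ is exactly the constant $K$ appearing in the paper's proof (so your ``equivalent for any choice of potentials'' should really read ``for some choice, i.e.\ up to a common constant''), and your observation that the condition is gauge-invariant precisely when $C=1$ is what makes the argument close.
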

\begin{proof}
We look here only for gradient-type solutions $x_{ai} =
u_a - u_i$ and write $y_{ai} = y_a - y_i$. Owing to the same arguments as
in Lemma~\ref{lem:eigen}, there exists a constant $K$ such that
\begin{align*}
\bigl(\Delta \>u\bigr)_a  &= K(d_a-1) +y_a -\sum_{j\in a}y_j\\
\bigl(\Delta \>u\bigr)_i &= y_i -K.
\end{align*}
Stating as before the compatibility condition for this equation yields
\[
\sum_{a\in\F}y_a+ \sum_{i\in\V}(1-d_i) y_i = K(C-1).
\]
It is always possible to find a suitable $K$ as long as $C\ne 1$ and when
$C=1$,
(\ref{eq:zeroF}) has to hold.
\end{proof}

\section{Proof of Theorem~\ref{thm:stability}}\label{app:stability}

Let us start with (ii): when the system is
homogeneous, $\widetilde J$ is a tensor product of $A$ with
$\widetilde B$, and its spectrum is therefore the product of their
respective spectra. In particular if $\G$ has uniform degrees $d_a$
and $d_i$, the condition reads
\[
\mu_2 (d_a-1)(d_i-1) < 1.
\] 

In order to prove part (i) of the theorem, we will consider a local norm on
$\mathbb{R}^q$ attached to each variable node $i$,
\[
\|x\|_{b^{(i)}} \egaldef \Bigl(\sum_{k=1}^q
x_k^2 b_k^{(i)}\Bigr)^{\frac{1}{2}}\ 
\text{and}\ 
\langle x\rangle_{b^{(i)}} \egaldef \sum_{k=1}^q x_k b_k^{(i)},
\]
the local average of $x\in\mathbb{R}^q$ w.r.t $b^{(i)}$. For
convenience we will also consider the somewhat hybrid global norm on
$\mathbb{R}^{q\times|\E|}$
\[
\|x\|_{\pi,b} \egaldef \sum_{a\to i}\pi_{ai}\|x_{ai}\|_{b^{(i)}},
\]
where $\boldsymbol\pi$ is again the right Perron vector of $A$,
associated to $\lambda_1$.

We have the following useful inequality.
\begin{lem}\label{lem:ineq}
For any $(x_i,x_j)\in \mathbb{R}^{2q}$, such that $\langle
x_i\rangle_{b^{(i)}} = 0$ and $x_{j,\ell}b_\ell^{(j)} = \sum_k x_{i,k}
b_k^{(i)}B_{k\ell}^{(iaj)}$,
\[
\langle x_j\rangle_{b^{(j)}} = 0
\qquad \text{and}\qquad 
\|x_j\|_{b^{(j)}}^2 \le \mu_2^{(iaj)} \|x_i\|_{b^{(i)}}^2.
\]
\end{lem}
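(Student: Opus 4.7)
The plan is to split the lemma into two independent assertions and treat them in order.

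First I would verify the mean-zero property for $x_j$. Summing the defining identity $x_{j,\ell}b_\ell^{(j)} = \sum_k x_{i,k}b_k^{(i)}B_{k\ell}^{(iaj)}$ over $\ell$ and using that $B^{(iaj)}$ is row-stochastic collapses the right-hand side to $\sum_k x_{i,k}b_k^{(i)} = \langle x_i\rangle_{b^{(i)}} = 0$. Hence $\langle x_j\rangle_{b^{(j)}}=0$ drops out immediately, with no need to invoke reversibility.

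Next I would attack the norm inequality by recasting the defining relation as a linear map and identifying it as the adjoint of a suitable operator. Using the reversibility identity $b_k^{(i)}B_{k\ell}^{(iaj)} = b_\ell^{(j)}B_{\ell k}^{(jai)}$ recalled just before the lemma statement, dividing the defining equation by $b_\ell^{(j)}$ yields $x_{j,\ell} = \sum_k B_{\ell k}^{(jai)}x_{i,k}$, i.e.\ $x_j = B^{(jai)}x_i$ as a matrix-vector product. Introduce the operator $T\colon L^2(b^{(j)})\to L^2(b^{(i)})$ defined by $(Tf)_k = \sum_\ell B_{k\ell}^{(iaj)}f_\ell$; the same reversibility identity (one line of computation) shows that its adjoint with respect to the two weighted inner products is $T^\ast = B^{(jai)}$, viewed as an operator $L^2(b^{(i)})\to L^2(b^{(j)})$. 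Consequently $TT^\ast = B^{(iaj)}B^{(jai)} = K^{(iaj)}$, and
\[
\|x_j\|_{b^{(j)}}^2 = \|T^\ast x_i\|_{b^{(j)}}^2 = \langle x_i, TT^\ast x_i\rangle_{b^{(i)}} = \langle x_i, K^{(iaj)}x_i\rangle_{b^{(i)}}.
\]

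Finally I would close with a standard spectral-theorem argument. Because $K^{(iaj)} = TT^\ast$ is self-adjoint and positive semi-definite on $L^2(b^{(i)})$, its spectrum is real and contained in $[0,1]$; the largest eigenvalue is $1$ with the constant function as eigenvector, since $K^{(iaj)}$ is row-stochastic and $b^{(i)}$-reversible as established earlier. The hypothesis $\langle x_i\rangle_{b^{(i)}}=0$ says exactly that $x_i$ lies in the $b^{(i)}$-orthogonal complement of the constants, which is an invariant subspace of $K^{(iaj)}$, so the Rayleigh quotient there is bounded above by $\mu_2^{(iaj)}$. This yields $\langle x_i, K^{(iaj)}x_i\rangle_{b^{(i)}} \leq \mu_2^{(iaj)}\|x_i\|_{b^{(i)}}^2$, which combined with the previous display gives the claim. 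The only delicate point is getting the adjoint identification right when the two $L^2$ spaces carry different weights, but once reversibility is in hand this is a short verification, so I do not anticipate a serious obstacle.
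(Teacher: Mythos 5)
Your proof is correct and follows essentially the same route as the paper: the adjoint identity $\|x_j\|_{b^{(j)}}^2=\langle x_i,K^{(iaj)}x_i\rangle_{b^{(i)}}$ is exactly the quadratic-form computation the paper performs directly, and the conclusion via the Rayleigh quotient of the reversible kernel $K^{(iaj)}$ restricted to mean-zero vectors is the same key step. Your operator packaging ($K^{(iaj)}=TT^\ast$, hence positive semi-definite) and the explicit check of $\langle x_j\rangle_{b^{(j)}}=0$ are minor refinements of details the paper leaves implicit, not a different argument.
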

\begin{proof}
By definition (\ref{def:kernel}), we have
\begin{align*}
\|x^{(j)}\|_{b^{(j)}}^2 &= \sum_{k=1}^q \frac{1}{b_k^{(j)}} 
\Bigl|\sum_{\ell=1}^q b_{\ell
k}^{(iaj)}b_\ell^{(i)}x_\ell^{(i)}\Bigr|^2\\[0.2cm]
&= \sum_{\ell,m} x_\ell^{(i)}x_m^{(i)} K_{\ell m}^{(iaj)}b_\ell^{(i)}.
\end{align*}
Since $K^{(iaj)}$ is reversible we have from Rayleigh's theorem
\[
\mu_2^{(iaj)} \egaldef \sup_{x}
\Bigl\{\frac{\sum_{k\ell}x_k x_\ell K_{k\ell}^{(iaj)}b_k^{(i)}}{\sum_k
x_k^2 b_k^{(i)}},
\langle x\rangle_{b^{(i)}} = 0,x\ne 0\Bigr\},
\]
which concludes the proof.
\end{proof}
To deal with iterations of $J$, we express it as a sum over paths.
\[
\bigl(J^n\bigr)_{ai,k}^{a'j,\ell} = \bigr(A^n\bigr)_{ai}^{a'j} 
\bigl(B_{ai,a'j}\n\bigr)_{k\ell},
\]
where $B_{ai,a'j}\n$ is an average stochastic kernel,
\begin{equation}\label{eq:paths}
B_{ai,a'j}\n \egaldef \frac{1}{|\Gamma_{ai,a'j}\n|}
\sum_{\gamma\in\Gamma_{ai,a'j}\n}\prod_{(x,y)\in\gamma}B^{(xy)}. 
\end{equation}
$\Gamma_{ai,a'j}\n$ represents the set of directed path of length
$n$ joining $ai$ and $a'j$ on $L(\G)$ and its cardinal is precisely
$|\Gamma_{ai,a'j}\n| = \bigr(A^n\bigr)_{ai}^{a'j}$.
\begin{lem}\label{lem:ineq2}
For any $(x_{ai},x_{a'j})\in \mathbb{R}^{2q}$, such that $\langle
x_i\rangle_{b^{(i)}} = 0$ and
\[
 x_{a'j,\ell}b_\ell^{(j)} 
   = \sum_k x_{ai,k}b_k^{(i)}\bigl(B_{ai,a'j}\n\bigr)_{k\ell},
\] 
the following inequality holds
\[
\|x_{a'j}\|_{b^{(j)}} \le \mu_2^n \|x_{ai}\|_{b^{(i)}}.
\]
\end{lem}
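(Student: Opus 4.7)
The plan is to reduce the statement to iterated applications of Lemma~\ref{lem:ineq} along a single path, and then average over paths by a triangle-inequality (convexity) argument.

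First, fix a directed path $\gamma = (z_0, z_1, \ldots, z_n)$ of length $n$ in $L(\G)$, with $z_0 = ai$ and $z_n = a'j$, and denote by $i_k$ the variable-node component of $z_k$ (so $i_0 = i$ and $i_n = j$). Starting from $y^\gamma_0 \egaldef x_{ai}$, I would define intermediate vectors $y^\gamma_k \in \R^q$ recursively by
\[
y^\gamma_{k,\ell}\, b_\ell^{(i_k)} = \sum_m y^\gamma_{k-1,m}\, b_m^{(i_{k-1})} B^{(z_{k-1} z_k)}_{m\ell}.
\]
Each step is exactly the setup of Lemma~\ref{lem:ineq} with the single-step kernel $B^{(z_{k-1} z_k)}$, so that lemma propagates both the zero-average condition $\langle y^\gamma_k\rangle_{b^{(i_k)}} = 0$ and the bound $\|y^\gamma_k\|_{b^{(i_k)}}^2 \le \mu_2^{(z_{k-1} z_k)}\, \|y^\gamma_{k-1}\|_{b^{(i_{k-1})}}^2$. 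Iterating and using that $\mu_2^{(xy)} \le \mu_2^2$ for every edge of $L(\G)$ by definition of $\mu_2$, one obtains
\[
\|y^\gamma_n\|_{b^{(j)}} \le \Bigl(\prod_{(x,y)\in\gamma}\mu_2^{(xy)}\Bigr)^{1/2} \|x_{ai}\|_{b^{(i)}} \le \mu_2^n\, \|x_{ai}\|_{b^{(i)}}.
\]

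Next, I would relate the given $x_{a'j}$ to the average of the per-path vectors $y^\gamma_n$. Unfolding the definition (\ref{eq:paths}) of $B^{(n)}_{ai,a'j}$ and the hypothesis $x_{a'j,\ell}\, b_\ell^{(j)} = \sum_k x_{ai,k} b_k^{(i)} (B^{(n)}_{ai,a'j})_{k\ell}$ gives
\[
x_{a'j} = \frac{1}{|\Gamma^{(n)}_{ai,a'j}|}\sum_{\gamma\in\Gamma^{(n)}_{ai,a'j}} y^\gamma_n.
\]
By the triangle inequality for $\|\cdot\|_{b^{(j)}}$ and the per-path bound above,
\[
\|x_{a'j}\|_{b^{(j)}} \le \frac{1}{|\Gamma^{(n)}_{ai,a'j}|}\sum_\gamma \|y^\gamma_n\|_{b^{(j)}} \le \mu_2^n\, \|x_{ai}\|_{b^{(i)}},
\]
which is the desired inequality.

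The only subtlety is checking that Lemma~\ref{lem:ineq} really applies at every step of the recursion, i.e., that the zero-mean property is indeed transmitted from $y^\gamma_{k-1}$ to $y^\gamma_k$; but this is precisely the first assertion of Lemma~\ref{lem:ineq}, so it goes through with $\langle y^\gamma_0 \rangle_{b^{(i)}} = \langle x_{ai}\rangle_{b^{(i)}} = 0$ as the base case. Everything else is straightforward once the path decomposition and the convexity argument are in place.
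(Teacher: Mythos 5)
Your proposal is correct and follows essentially the same route as the paper's own proof: decompose the averaged kernel $B_{ai,a'j}\n$ over the paths $\gamma\in\Gamma_{ai,a'j}\n$, apply Lemma~\ref{lem:ineq} recursively along each path (using $\mu_2^{(iaj)}\le\mu_2^2$), and conclude by the triangle inequality on the average. The only difference is that you spell out explicitly the intermediate vectors $y^\gamma_k$ and the propagation of the zero-mean condition, which the paper compresses into the phrase ``using Lemma~\ref{lem:ineq} recursively.''
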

\begin{proof}
Let $x_{a'j}^\gamma$ the contribution to $x_{a'j}$ corresponding to the
path $\gamma\in\Gamma_{ai,a'j}\n$. Using Lemma~\ref{lem:ineq}
recursively yields for each individual path
\[
\|x_{a'j}^\gamma\|_{b^{(j)}} \le \mu_2^n \|x_{ai}\|_{b^{(i)}},
\]
and, owing to triangle inequality,
\[
\|x_{a'j}\|_{b^{(j)}} \le
\frac{1}{|\Gamma_{ai,a'j}\n|}\sum_{\gamma\in\Gamma_{ai,a'j}\n}
\|x_{a'j}^\gamma\|_{b^{(j)}} \le \mu_2^n \|x_{ai}\|_{b^{(i)}}.
\]
\end{proof}

It is now possible to conclude the proof of the theorem.

\begin{proof}[Proof of Theorem~\ref{thm:stability}(i)]
(i) Let $\v$ and $\v'$ two vectors with $\v' = \v \tilde J^n =
\v(\I-M)J^n$, ($M$ is the projector defined in
Proposition~\ref{prop:Jtilde}) since $\tilde J M = 0$. Recall that the
effect of $(\I-M)$ is to first project on a vector with zero local
sum, $\sum_k\bigl(\v(\I-M)\bigr)_{ai,k} = 0,\ \forall i\in\V$, so we
assume directly $\v$ of the form
\[
v_{ai,k} = x_{ai,k}b_k^{(i)},\qquad\text{with}\qquad 
\langle x_{ai}\rangle_{b^{(i)}} = 0.
\]
As a result $\v' = \v J^n = \v'(\I-M)$ is of the same form. Let
$x'_{a'j,\ell}\egaldef v'_{a'j,\ell}/b_\ell^{(j)}$. We have
\[
\|x'\|_{\pi,b} 
\le \sum_{a'\to j}\pi_{a'j}\sum_{a\to i}\bigl(A^n\bigr)_{ai}^{a'j}
\|y_{a'j}\|_{b^{(j)}}
\]
with $y_{a'j,\ell}\ b_\ell^{(j)} = \sum_k
x_{ai,k}b_k^{(i)}\bigl(B_{ai,a'j}\n\bigr)_{k\ell}$.
From Lemma~\ref{lem:ineq2} applied to $y_{a'j}$,
\begin{align*}
\|x'\|_{\pi,b} 
&\le
\sum_{a'\to j}\pi_{a'j}\sum_{a\to i}\bigl(A^n\bigr)_{ai}^{a'j}\mu_2^n\|x_{ai}\|_{b^
{(i)}}
= \lambda_1^n\mu_2^n \|x\|_{\pi,b},
\end{align*}
since $\boldsymbol\pi$ is the right Perron vector of $A$. 
\end{proof}

\bibliography{refer}
\bibliographystyle{abbrvnat}
\end{document}
